\documentclass{article}


\PassOptionsToPackage{numbers}{natbib}



\usepackage{tikz}
\usetikzlibrary{shapes,decorations,arrows,calc,arrows.meta,fit,positioning}
\definecolor{light-grey}{rgb}{0.8,0.8,0.8}
\usetikzlibrary{shadows}
\tikzset{
diagonal fill/.style 2 args={fill=#2, path picture={
\fill[#1, sharp corners] (path picture bounding box.south west) -|
                         (path picture bounding box.north east) -- cycle;}},
reversed diagonal fill/.style 2 args={fill=#2, path picture={
\fill[#1, sharp corners] (path picture bounding box.north west) |- 
                         (path picture bounding box.south east) -- cycle;}}
}

\usepackage{xurl}

\usepackage{amsmath}
\usepackage{amsthm}
\usepackage{todonotes}

\newtheorem{definition}{Definition}
\newtheorem{assumption}{Assumption}


\newtheorem{thmlem}{Lemma}

\newtheorem{thmthm}{Theorem}


\newtheorem{applemma}{Lemma}

\newtheorem{appcorollary}{Corollary}

\newtheorem{apptheorem}{Theorem}

\usepackage{subfigure}
\usepackage{wrapfig}

\usepackage{algorithm}
\usepackage{algpseudocode}


\usepackage[preprint]{neurips_2025}



\usepackage[utf8]{inputenc} 
\usepackage[T1]{fontenc}    
\usepackage{hyperref}       
\usepackage{url}            
\usepackage{booktabs}       
\usepackage{amsfonts}       
\usepackage{nicefrac}       
\usepackage{microtype}      
\usepackage{xcolor}         
\usepackage[inline]{enumitem} 
\usepackage{ dsfont } 

\title{Estimating Misreporting in the Presence of Genuine Modification: A Causal Perspective}

%

\author{%
Dylan Zapzalka$^{1}$ \quad Trenton Chang$^{1}$ \quad Lindsay Warrenburg$^2$ \quad Sae-Hwan Park$^2$ \\
\textbf{Daniel K. Shenfeld}$^{2}$ \quad \textbf{Ravi B. Parikh}$^{2,3}$ \quad \textbf{Jenna Wiens}$^1$ \quad \textbf{Maggie Makar}$^1$ \\
$^1$University of Michigan \quad $^2$University of Pennsylvania \quad $^3$Emory University\\
\texttt{\{dylanz, ctrenton, wiensj, mmakar\}@umich.edu}\\
\texttt{\{lindsay.warrenburg, sae-hwan.park, daniel.shenfeld\}@pennmedicine.upenn.edu}\\
\texttt{ravi.bharat.parikh@emory.edu}\\
}

\begin{document}

\maketitle

\begin{abstract}
In settings where ML models are used to inform the allocation of resources, agents affected by the allocation decisions might have an incentive to strategically change their features to secure better outcomes.
While prior work has studied strategic responses broadly, disentangling misreporting from genuine modification remains a fundamental challenge.
In this paper, we propose a causally-motivated approach to identify and quantify how much an agent misreports on average by distinguishing deceptive changes in their features from genuine modification.
Our key insight is that, unlike genuine modification, misreported features do not causally affect downstream variables (i.e., causal descendants). 
We exploit this asymmetry by comparing the causal effect of misreported features on their causal descendants as derived from manipulated datasets against those from unmanipulated datasets. 
We formally prove identifiability of the misreporting rate and characterize the variance of our estimator.
We empirically validate our theoretical results using a semi-synthetic and real Medicare dataset with misreported data, demonstrating that our approach can be employed to identify misreporting in real-world scenarios.
\end{abstract}

\section{Introduction}

Machine learning models are increasingly used by decision-makers to guide decisions about the allocation of critical resources such as in loan applications, or determining government payouts to private insurers \cite{baumer2017illuminating, zhan2018loan, geruso2020upcoding}. 
In these contexts, organizations—referred to as agents—may have an incentive to strategically change their features to secure better outcomes \cite{miller2020strategic}. They can do so through genuine modification or misreporting. \textit{Genuine modification} refers to agents genuinely changing their behavior, causing the actual values of their features to change. This leads to real improvements and authentic changes. 
Misreporting refers to agents not changing their behavior but instead reporting incorrect feature values to manipulate the allocation process, creating an illusion of improvement without any real change. 
Incentivizing genuine modification is often desirable to the decision-maker, as it can lead to improvements in the target outcome \cite{shavit2020causal, harris2022strategic}. 
Misreporting, however, is never desirable to the decision-maker as it leads to incorrect predictions and inefficient resource allocation.

As a running example, we consider the U.S. Medicare Advantage (MA) program, where the government uses a public risk adjustment model to allocate payments to privately-owned insurers based on enrollee health \cite{pope2004risk, mcwilliams2012new}.
This risk adjustment model is designed to recommend higher payments to insurers for higher-risk, sicker enrollees\cite{brown2014does}. In response, private insurers may genuinely modify their enrollees' health by providing targeted interventions to high-risk enrollees, reducing their costs and increasing their profit margins.
Alternatively, they may resort to misreporting, or ``upcoding,'' by inflating diagnosis codes to make enrollees appear sicker, which in turn increases payments without additional treatment expenses \cite{joiner2024upcoding}.
Upcoding has led to an estimated \$50 billion in overpayments \cite{lieberman2025improving} and over \$100 million in auditing efforts in 2024 alone \cite{cms2024}. Estimating the extent of misreporting is therefore critical for prioritizing oversight and promoting resource efficiency.

In this work, we develop a causal framework for detecting and quantifying misreporting in the presence of genuine modification.
Our key insight is that misreporting, unlike genuine modification, does not causally affect the descendants of a given feature. Consequently, misreporting leads to biased causal effect estimates between the feature and its descendants. 
We exploit this asymmetry by comparing the estimated causal effect of a feature on its descendants in both manipulated and unmanipulated datasets to infer a misreporting rate. 
We assume access to both types of data, where the unmanipulated dataset is collected in a setting where there are no incentives to misreport, such as a setting prior to model deployment.

Our contributions are summarized as follows. 
\begin{enumerate*}[label=(\arabic*)]
    \item We recast the problem of quantifying the misreporting rate as a causal problem, showing that causal descendants of features can be used to distinguish changes due to genuine modification and misreporting.
    \item We propose a novel estimator for the misreporting rate that leverages discrepancies in causal effect estimates computed from manipulated data and unmanipulated data where incentives to misreport are absent.
    \item We provide theoretical guarantees, including conditions under which the misreporting rate is identifiable and variance analysis of our estimator.
    \item We empirically validate the performance of our estimator, showing that it outperforms baseline approaches on a semi-synthetic and real Medicare dataset. Code for our experiments will be made publicly available on GitHub.
\end{enumerate*}

\section{Related Work}

\textbf{Strategic Classification and Regression.}
Our work is related to work on strategic classification and regression, where agents may change their features at some cost \cite{hardt2016strategic, dong2018strategic, levanon2021strategic}. 
However, it differs in two key aspects. (1) The primary goal is to find a model that is robust to the distribution shifts caused by gaming, often relying on known agents' cost functions and  iterative model retraining \cite{perdomo2020performative}. In contrast, we seek to estimate how much agents misreport. (2) Our method doesn't require the unrealistic assumptions of known agents' cost function or the iterative model training. 

\textbf{Causal Strategic Classification.} Recent work views strategic classification/regression through a causal lens \cite{miller2020strategic, shavit2020causal, harris2022strategic, horowitz2023causal} where agents can \textit{only} genuinely modify their features. They distinguish between two types of genuine modifications: improvement and gaming, which correspond to modifications to features that are and are not causally related to the target label, respectively\cite{miller2020strategic}. Unlike us, their focus is on creating models robust to both forms of genuine modification \cite{horowitz2023causal} and finding models that incentivize improvement over gaming \cite{shavit2020causal, harris2022strategic}. Closest to our work is \citet{chang2024s}, who propose an algorithm that can rank agents by their propensity to misreport their features. Unlike our work, their approach can only partially identify how much agents misreport and they do not make a distinction between misreporting and genuine modification.

\textbf{Auditing Policies.} Other work seeks to disincentivize agents from misreporting their features through auditing \cite{jalota2024catch, estornell2021incentivizing, estornell2023incentivizing}. They define a setting where the decision-maker deploys a transparent auditing policy which allows them to reveal the true features of a limited number of agents selected by the policy. If the agent's true features differ from their reported features, they endure a penalty, which incentivizes them to report their true features. Instead of performing costly audits, our work estimates misreporting by relying on additional unmanipulated data from settings where agents have no incentive to misreport, e.g., data collected before any model was deployed. 

\textbf{Anomaly/Fraud Detection.} Our work is closely related to anomaly detection methods aimed at identifying fraudulent instances within a dataset, such as those arising in credit card transactions or insurance claims \cite{hilal2022financial, chandola2009anomaly}. Most relevant are one-class classification algorithms \cite{seliya2021literature, manevitz2001one, ruff2018deep}, which are trained on an unmanipulated dataset to detect anomalies in a manipulated dataset. Unlike our work, these methods focus on identifying specific instances that are anomalous or misreported, whereas our method estimates a rate of misreporting in a dataset. These methods also rely on the assumption that misreported data points differ significantly from normal data points.  Our method instead relies on causal assumptions, specifically, that misreporting does not affect the causal descendants of features.

\section{Preliminaries}

\begin{figure*}[t]
\begin{center}
\subfigure[]{
\resizebox{0.2\textwidth}{!}{
\begin{tikzpicture}
\tikzset{
    -Latex,auto,node distance =1 cm and 1 cm,semithick,
    state/.style ={circle, draw, minimum width = 0.88 cm},
    point/.style = {circle, draw, inner sep=0.04cm,fill,node contents={}},
    bidirected/.style={Latex-Latex,dashed},
    el/.style = {inner sep=2pt, align=left, sloped}}

    \node[state] (a) at (0,0) [fill=light-grey]{$A$};
    \node[state] (xstar) at (1.5,0) {$X^{*}$};
    \node[state] (x) at (1.5,1.5) [fill = light-grey] {$X$};
    \node[state] (y) at (3,0) [fill = light-grey] {$Y$};
    \node[state] (c) at (2.25,-1.25) [fill = light-grey] {$C$};

    \path (a) edge[dashed] (xstar);
    \path (xstar) edge (x);
    \path (c) edge (xstar);
    \path (c) edge (y);
    \path (xstar) edge (y);
    \path (a) edge[double, double distance=2pt] (x);
    \path (a) edge[dashed] (c);
\end{tikzpicture}
}
\label{fig:dag-a}
}
\hspace{1.0em}
\subfigure[]{
\resizebox{0.2\textwidth}{!}{
\begin{tikzpicture}
\tikzset{
    -Latex,auto,node distance =1 cm and 1 cm,semithick,
    state/.style ={circle, draw, minimum width = 0.88 cm},
    point/.style = {circle, draw, inner sep=0.04cm,fill,node contents={}},
    bidirected/.style={Latex-Latex,dashed},
    el/.style = {inner sep=2pt, align=left, sloped}}

    \node[state] (a) at (0,0) [fill=light-grey]{$A$};
    \node[state] (xstar) at (1.5,0) {$X^{*}$};
    \node[state] (x) at (1.5,1.5) [fill = light-grey] {$X$};
    \node[state] (y) at (3,0) [fill = light-grey] {$Y$};
    \node[state] (c) at (2.25,-1.25) [fill = light-grey] {$C$};
    \node[state] (s) at (0.75,-1.25) [] {$S$};

    \path (a) edge[dashed] (xstar);
    \path (xstar) edge (x);
    \path (c) edge (xstar);
    \path (c) edge (y);
    \path (xstar) edge (y);
    \path (a) edge[double, double distance=2pt] (x);
    \path (s) edge (a);
    \path (s) edge (xstar);
\end{tikzpicture}
}
\label{fig:dag-b}
}
\hspace{1.0em}
\subfigure[]{
\resizebox{0.2\textwidth}{!}{
\begin{tikzpicture}
\tikzset{
    -Latex,auto,node distance =1 cm and 1 cm,semithick,
    state/.style ={circle, draw, minimum width = 0.88 cm},
    point/.style = {circle, draw, inner sep=0.04cm,fill,node contents={}},
    bidirected/.style={Latex-Latex,dashed},
    el/.style = {inner sep=2pt, align=left, sloped}}

    \node[state] (a) at (0,0) [fill=light-grey]{$A$};
    \node[state] (xstar) at (1.5,0) {$X^{*}$};
    \node[state] (x) at (1.5,1.5) [fill = light-grey] {$X$};
    \node[state] (y) at (3,0) [fill = light-grey] {$Y$};
    \node[state] (c) at (2.25,-1.25) [fill = light-grey] {$C$};
    \node[state] (u) at (3,1.5) [] {$U$};

    \path (a) edge[dashed] (xstar);
    \path (xstar) edge (x);
    \path (c) edge (xstar);
    \path (c) edge (y);
    \path (xstar) edge (y);
    \path (a) edge[double, double distance=2pt] (x);
    \path (a) edge[dashed] (c);
    \path (u) edge (y);
    \path (u) edge[bend right=70] (a);
\end{tikzpicture}
}
\label{fig:dag-c}
}
\hspace{1.0em}
\subfigure[]{
\resizebox{0.2\textwidth}{!}{
\begin{tikzpicture}
\tikzset{
    -Latex,auto,node distance =1 cm and 1 cm,semithick,
    state/.style ={circle, draw, minimum width = 0.88 cm},
    point/.style = {circle, draw, inner sep=0.04cm,fill,node contents={}},
    bidirected/.style={Latex-Latex,dashed},
    el/.style = {inner sep=2pt, align=left, sloped}
}
    \node[state] (a) at (0,0) [fill=light-grey]{$A$};
    \node[state] (xstar) at (1.5,0) [fill = light-grey] {$X^{*}$};
    \node[state] (y) at (3,0) [fill = light-grey] {$Y$};
    \node[state] (c) at (2.25,-1.25) [fill = light-grey] {$C$};

    \path (a) edge[dashed] (xstar);
    \path (c) edge (xstar);
    \path (c) edge (y);
    \path (xstar) edge (y);
    \path (a) edge[dashed] (c);
\end{tikzpicture}
}
\label{fig:dag-d}
}

\caption{Causal DAGs that describe the setting of this paper. White nodes are unobserved whereas grey nodes are observed. Double-line arrows represent misreporting while dashed arrows represent genuine modification. 
DAGs (a)-(c) represent manipulated data-generating processes while DAG (d) represents unmanipulated data. The DAG in (a) represents our simplified main setting, (b) represents a setting with selection bias, (c) depicts a setting with unobserved confounding between $A$ and $Y$, and (d) represents unmanipulated data with potential genuine modification but no misreporting. \vspace{-1em}}
\label{fig:dag}
\end{center}

\end{figure*}
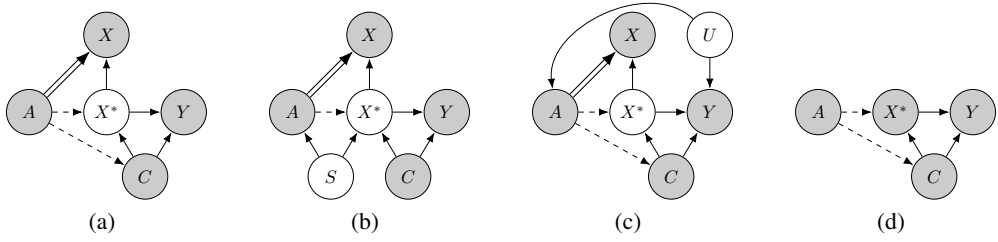

\textbf{Setup.} We study a setting where some agents may either genuinely modify and/or misreport their features. 
Let $A$ denote the agent identity, $X^{*}$ the true features, $X$ the (potentially misreported) agent-reported features, $Y$ a downstream variable causally influenced by $X^{*}$, and $C$ observed features that may act as confounders or effect modifiers for the relationship between $X^{*}$ and $Y$. We use uppercase letters to denote variables and lowercase for their realizations. 

We assume that we have access to two datasets: 
\begin{enumerate*}[label=(\arabic*)]
    \item A possibly \textit{manipulated dataset} ${\mathcal{D} = \{(x_{i}, y_{i}, c_{i}, a_i)\}_{i=1}^{N}} \sim P$, where $P$ follows any of the DAGs in \ref{fig:dag}(a)-(c). 
    \item An \textit{unmanipulated dataset} $\mathcal{D}^* = \{(x^{*}_{i}, y_{i}, c_{i})\}_{i=1}^{M} \sim P^*$ generated according to DAG~\ref{fig:dag-d}. 
\end{enumerate*}
$\mathcal{D}^*$ may be pre-deployment data used to train the decision-making model, as agents have no incentive to manipulate their features before the model is deployed.
We do not require access to agent identities in $\mathcal{D}^*$, nor do we assume that the same agents appear in both datasets. 

In Figure \ref{fig:dag}, we use dashed arrows to indicate genuine modification and double-line arrows to indicate misreporting. In DAG~\ref{fig:dag-d}, which represents the unmanipulated distribution, only genuine modifications are allowed.
For clarity, we present our main analysis assuming $D$ is sampled according to the DAG in Figure~\ref{fig:dag-a}. However, our results apply without modification to the more complicated DAGs in Figure~\ref{fig:dag}(b)-(c), which include selection bias via an unobserved variable $S$ influencing both $A$ and $X^*$, or an unobserved confounder $U$ between $A$ and $Y$. Additional allowable DAGs are included in Appendix \ref{appendix:additional-dags}.

\textbf{Running example.} In Medicare claims, $\mathcal{D}^*$ can be insurance claims covered directly by the government (i.e., Traditional Medicare (TM)), where there is no incentive to misreport. 
$\mathcal{D}$ can be claims covered by private insurers. 
Selection bias could happen, for example, when price-sensitive enrollees ($S$) who are less likely to have chronic conditions ($X^*$) prefer private insurance ($A$). Confounding between $A$ and $Y$ could occur if enrollees who prefer private insurance ($A$) tend to be sicker ($Y$). 

\textbf{Assumptions.} We assume that $X^{*}$ and $X$ are binary whereas all other variables may be continuous. Without loss of generality, we assume that $X =1$ is associated with a higher payout than $X=0$ which means that agents are not incentivized to misreport features where $X^{*}=1$, as formally stated in Assumption \ref{assumption:optimal-misreporting}. We adopt the notation of the Neyman-Rubin potential outcomes framework \citep{rubin2011causal}, where $X(X^*=x^*)$ is defined as the counterfactual outcome that we would get if $X^*$ is set to $x^*$.
\begin{assumption}[Optimal Misreporting] \label{assumption:optimal-misreporting}
    $\forall i, \,\,\, x_{i}(x_{i}^{*}=1) = 1$
\end{assumption}
We assume that agents are incentivized to misreport only $X^{*}$ and genuinely modify it as follows.
\begin{assumption}[Useful Modifications] \label{assumption:useful-modifications}
    Agents may only misreport $X^{*}$, or genuinely modify it by intervening on $X^{*}$ or its ancestors.
\end{assumption}

Under this setting, our goal is to determine how much an agent misreported their features, without access to the true features $X^{*}$. Letting $P_a(V) :=P(V | A=a)$ for an arbitrary variable $V$, we define our estimand of interest as follows:
\begin{definition}[Misreporting Rate] \label{def:misreporting-rate}
    $\text{MR} = P_{a}(X^{*}=0 | X = 1)$
\end{definition}
This definition of the MR quantifies the conditional probability that a reported feature is false. We also show in Appendix \ref{appendix:additional-estimands} that our analysis can be trivially extended to other variants  such as the false positive rate, $P_{a}(X = 1 |X^{*}=0)$, and the marginal difference $P_{a}(X = 1) - P_{a}(X^{*}=1)$.

Key to our suggested approach will be our ability to estimate the causal effect of $X^*$ on $Y$.  We also require the typical assumptions used in causal inference, defined below.
\begin{assumption}\label{assumption:id}
The features $X^{*}$, $C$, and the potential outcomes $Y(X^*=1), Y(X^*=0)$ satisfy the following properties:
\begin{enumerate}[topsep=0pt,itemsep=-0.5ex,partopsep=1ex,parsep=1ex]
    \item No unmeasured confounding: $Y(0), Y(1) \perp X^{*} \mid C$
    \item Overlap:  $P_{a}(X^{*}=x^{*} | C=c), P_{a}(X=x | C=c), P^{*}(X^{*}=x^{*} | C=c) > 0 \,\,\,\, \forall x^{*}, x, c$
    \item Consistency: $Y_{i}(x^{*}) = y_{i}$ if $X_i^{*}=x^{*}$. 
\end{enumerate}
\end{assumption}

Finally, we assume that the conditional average treatment effects are invariant across $P_{a}$ and $P^{*}$ for all values of $a$.
\begin{assumption} \label{assumption:same-cate}
    $\mathbb{E}_{P_{a}}[Y(1) -  Y(0)| C=c] = \mathbb{E}_{P^{*}}[Y(1) - Y(0) | C=c] \,\,\,\, \forall c, a$ 
\end{assumption}

This assumption allows us to leverage unmanipulated data $\mathcal{D}^*$ to recover causal effects needed for estimating misreporting in $\mathcal{D}$.

\section{Estimating Misreporting Rates}

Recall that our goal is to estimate the misreporting rate for a given agent, defined in Definition \ref{def:misreporting-rate}. 
The core challenge lies in the fact that we only observe the reported features $X$, but not the true features $X^{*}$. This means that the estimand is not identifiable from $\mathcal{D}$ alone.
A naive approach would be to combine $\mathcal{D}$ and $\mathcal{D}^*$ to estimate the causal effect of $A$ on the observed feature values. 
However, this would yield a biased estimate of the misreporting rate because it gives an estimate of \emph{both} the effect of genuine modification and misreporting, i.e., a biased estimate of misreporting, as we establish in Appendix \ref{appendix:baselines}. Instead, we estimate the misreporting rate by leveraging the distinct causal consequences that agent interventions corresponding to genuine modification and misreporting have on the downstream variables $Y$. 

In Section~\ref{subsec:main_mr}, we present our main estimator for the misreporting rate. In Section \ref{subsection:variance}, we analyze the variance of our estimator and give guidance on how to generate a low-variance estimator.

\subsection{Our causal misreporting estimator (CMRE)}\label{subsec:main_mr}

Our key insight is that genuine modification and misreporting have different effects on the causal descendants of $X^{*}$. When an agent genuinely modifies  $X^{*}$, this results in a change to its causal descendant, $Y$. In contrast, when agents misreport, they only change the reported feature $X$, which doesn't causally affect $Y$. Because of that difference, we can use the causal effect of $X$ on $Y$ and that of $X^*$ on $Y$ as a signature to distinguish between misreporting and genuine modification.  
To make progress, we define the ``reported'' group as the group for whom $X=1$ and define the true average feature effect on the reported (TAFR) and the nominal average feature effect on the reported (NAFR) as follows: 
\begin{align*}
    \tau^*_{a} := \int_C (\mathbb{E}_{P_{a}}[Y(X^*=1) | C=c]  - \mathbb{E}_{P_{a}}[Y(X^{*}=0) | C=c]) P_{a}(C=c |X=1) dc, 
\end{align*}
and
\begin{align*}
    \tau_a := \int_C (\mathbb{E}_{P_{a}}[Y | X=1, C=c]  - \mathbb{E}_{P_{a}}[Y | X=0, C=c]) P_{a}(C=c |X=1) dc, 
\end{align*}
respectively.
These two expressions are similar to the average treatment effect on the treated, a commonly studied estimand in causal inference. They define the true and nominal causal effects for the population for whom $X=1$. 
We focus on causal effects for the ``reported'' group defined by $X=1$ because the MR as defined in Defnition~\ref{def:misreporting-rate} is a conditional estimand over $X=1$.

Although observing a difference in $\tau^*_a$ and $\tau_a$ indicates that an agent misreported, it doesn't give us a rate at which an agent misreports. To obtain the misreporting rate, we must compare the difference in the NAFR and TAFR relative to the baseline causal effect of $X^*$ on $Y$ for the group that is misreported. Since only the variable $X^{*}$ influences an agent's decision to misreport a datapoint, the misreported group will be a random sample of the group where $X^{*}=0$.
Therefore, the average causal effect of $X^{*}$ on $Y$ for the misreported will be the average causal effect on the datapoints where $X^{*}=0$. We define this as the true average feature effect on the misreported (TAFM):
\begin{align*}
    \delta^{*}_a := \int_C (\mathbb{E}_{P_{a}}[Y(X^{*}=1) | C=c]  - \mathbb{E}_{P_{a}}[Y(X^{*}=0) | C=c]) P_{a}(C=c |X^{*}=0) dc. 
\end{align*}
Next, we show that the MR can -- in principle -- be quantified as the rate of the differences between the TAFR and NAFR over the TAFM.

\begin{thmlem}[Estimator for the misreporting rate] \label{lemma:estimator}
Let Assumptions \ref{assumption:optimal-misreporting}-\ref{assumption:id} hold. Then for $\delta^*_a \not = 0$, the MR can be expressed as: 
\[
P_{a}(X^{*}=0 | X=1)= \frac{\tau^{*}_a - \tau_a}{\delta^{*}_a}.
\]
\end{thmlem}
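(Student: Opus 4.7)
The plan is to express both $\tau_a$ and $\tau^*_a$ in terms of the same conditional average treatment effect function $\mu(c) := \mathbb{E}_{P_a}[Y(1) - Y(0) \mid C=c]$ and then show that their difference factors cleanly into the product $P_a(X^*=0 \mid X=1)\,\delta^*_a$.

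First I would rewrite $\mathbb{E}_{P_a}[Y \mid X=x, C=c]$ in causal terms for each $x \in \{0,1\}$. By Assumption \ref{assumption:optimal-misreporting}, $X=0$ forces $X^*=0$, so $\mathbb{E}_{P_a}[Y \mid X=0, C=c]$ collapses directly to $\mathbb{E}_{P_a}[Y \mid X^*=0, C=c]$. For $X=1$, I decompose via the law of total expectation over $X^*$:
\begin{align*}
\mathbb{E}_{P_a}[Y \mid X=1, C=c] = \sum_{x^* \in \{0,1\}} P_a(X^*=x^* \mid X=1, C=c)\,\mathbb{E}_{P_a}[Y \mid X^*=x^*, X=1, C=c].
\end{align*}
The DAG of Figure \ref{fig:dag-a} yields $Y \perp X \mid X^*, C, A$ (every path from $X$ to $Y$ is d-separated by $X^*$, together with $A$ or $C$), so each inner expectation reduces to $\mathbb{E}_{P_a}[Y \mid X^*=x^*, C=c]$, which by Assumption \ref{assumption:id} (no unmeasured confounding and consistency) equals $\mathbb{E}_{P_a}[Y(x^*) \mid C=c]$.

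Substituting and subtracting, with $q(c) := P_a(X^*=0 \mid X=1, C=c)$, gives
\begin{align*}
\mathbb{E}_{P_a}[Y \mid X=1, C=c] - \mathbb{E}_{P_a}[Y \mid X=0, C=c] = (1 - q(c))\,\mu(c).
\end{align*}
Integrating both sides against $P_a(C=c \mid X=1)$ shows $\tau_a = \tau^*_a - \int q(c)\,\mu(c)\,P_a(C=c \mid X=1)\,dc$, whence
\begin{align*}
\tau^*_a - \tau_a = \int q(c)\,\mu(c)\,P_a(C=c \mid X=1)\,dc.
\end{align*}

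To finish, I would apply Bayes' rule to get $q(c)\,P_a(C=c \mid X=1) = P_a(C=c \mid X^*=0, X=1)\,P_a(X^*=0 \mid X=1)$, then use the second DAG-implied independence $C \perp X \mid X^*, A$ to replace $P_a(C=c \mid X^*=0, X=1)$ with $P_a(C=c \mid X^*=0)$. Pulling the scalar $P_a(X^*=0 \mid X=1)$ out of the integral, the remaining integral is exactly $\delta^*_a$ by definition, so $\tau^*_a - \tau_a = P_a(X^*=0 \mid X=1)\,\delta^*_a$, and dividing by $\delta^*_a \neq 0$ completes the proof. The main obstacle I anticipate is rigorously establishing the two d-separation statements $Y \perp X \mid X^*, C, A$ and $C \perp X \mid X^*, A$ — especially in the more complex DAGs of Figures \ref{fig:dag-b}--\ref{fig:dag-c}, where the unobserved $S$ or $U$ open back-door paths that must be shown to be blocked by conditioning on $\{X^*, A, C\}$; once those independences are in hand, the remainder of the argument is algebraic bookkeeping.
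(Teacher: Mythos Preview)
Your proposal is correct and follows essentially the same route as the paper's proof: decompose $\mathbb{E}_{P_a}[Y\mid X=1,C=c]$ over $X^*$, use Assumption~\ref{assumption:optimal-misreporting} for the $X=0$ branch, invoke the two d-separations $Y\perp X\mid X^*,A$ and $C\perp X\mid X^*,A$, then apply Bayes' rule to factor out $P_a(X^*=0\mid X=1)$ and recognize the remaining integral as $\delta^*_a$. The only cosmetic difference is that you pass to potential outcomes via $\mu(c)$ a step earlier than the paper does; as for your anticipated obstacle, the paper simply asserts the two independences hold for all the DAGs in Figure~\ref{fig:dag} (and its appendix variants) without a path-by-path verification.
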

The proof for Lemma~\ref{lemma:estimator} and all other statements in this subsection are presented in  Appendix \ref{appendix:main-proofs}.
The lemma states that we can quantify the MR by comparing the true and nominal causal effects of $X^*$ on $Y$ and $X$ on $Y$ respectively. 
While instructive, Lemma~\ref{lemma:estimator} is not very useful as we do not have access to $X^{*}$ for agent $a$, and hence we cannot directly estimate $\tau^{*}_a$ or $\delta^{*}_a$ from the manipulated data alone. 

To resolve this issue, we leverage the unmanipulated dataset $\mathcal{D}^{*}$ to estimate two other quantities in place of $\tau^*_a$ and $\delta^*_a$. 
Specifically, we define 
\begin{align*}
    \tau'_a := \int_C (\mathbb{E}_{P^*}[Y(X^{*}=1) | C=c]  - \mathbb{E}_{P^*}[Y(X^{*}=0) | C=c]) P_{a}(C=c |X=1) dc
\end{align*}
and
\begin{align*}
    \delta'_a := \int_C (\mathbb{E}_{P^{*}}[Y(X^{*}=1) | C=c]  - \mathbb{E}_{P^{*}}[Y(X^{*}=0) | C=c]) P_{a}(C=c |X=0) dc.
\end{align*}

Both $\tau'_a$ and $\delta'_a$ are identifiable because $X^*$ is observed in the unmanipulated dataset and can be used as valid estimators of $\tau^{*}_a$ and $\delta^{*}_a$ to estimate the MR, as we show in Theorem \ref{thm:ident}.

\begin{thmthm}[Identifiability] \label{thm:ident}
Let Assumptions \ref{assumption:optimal-misreporting}-\ref{assumption:same-cate} hold. Then for $\delta'_{a} \not = 0$, $P_{a}(X^{*}=0 | X=1)$ is identifiable and can be expressed as: 
\begin{align*}
    P_{a}(X^{*}=0 | X=1) = \frac{\tau'_a - \tau_a}{\delta'_a}.
\end{align*} 
\end{thmthm}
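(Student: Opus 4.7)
The plan is to combine Lemma 1 (which already expresses the MR as $(\tau^*_a - \tau_a)/\delta^*_a$) with Assumption 4 (CATE invariance) and Assumption 1 (optimal misreporting) to replace the unobservable $\tau^*_a$ and $\delta^*_a$ with their observable counterparts $\tau'_a$ and $\delta'_a$. Concretely, my target is to prove the two equalities $\tau^*_a = \tau'_a$ and $\delta^*_a = \delta'_a$; the theorem then follows by direct substitution into Lemma 1 together with the hypothesis that $\delta'_a \neq 0$.

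First I would handle $\tau^*_a = \tau'_a$. Both integrands are conditional average treatment effects of $X^*$ on $Y$ given $C=c$, the first computed under $P_a$ and the second under $P^*$. By Assumption 4 these are pointwise equal in $c$. Since both integrals use the \emph{same} weighting measure $P_a(C=c\mid X=1)$, the equality follows immediately. Note this step implicitly uses Assumption 3 (no unmeasured confounding, overlap, consistency) in both populations so that the expectations of the potential outcomes are well-defined quantities that the CATE-invariance assumption can act on.

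The main obstacle is $\delta^*_a = \delta'_a$, because here the two integrals not only evaluate the CATE under different distributions but also use different weighting measures: $P_a(C=c \mid X^*=0)$ versus $P_a(C=c \mid X=0)$. The CATE part is again handled by Assumption 4, so the crux is to show
\begin{equation*}
P_a(C=c \mid X^*=0) = P_a(C=c \mid X=0).
\end{equation*}
I would establish this in two short steps. (i) Assumption 1 (optimal misreporting) implies $X=1$ whenever $X^*=1$, so the event $\{X=0\}$ is a subset of $\{X^*=0\}$; hence $P_a(C=c \mid X=0) = P_a(C=c \mid X=0, X^*=0)$. (ii) From the DAG in Figure 1(a), conditional on $A$ the only path from $C$ to $X$ passes through $X^*$, so $C \perp X \mid X^*, A$ by d-separation; therefore $P_a(C=c \mid X=0, X^*=0) = P_a(C=c \mid X^*=0)$. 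Chaining (i) and (ii) gives the desired equality of weighting measures, after which Assumption 4 yields $\delta^*_a = \delta'_a$.

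Finally, I would note that the same d-separation argument goes through for the alternative DAGs in Figure 1(b)--(c): in (b) the selection variable $S$ is an ancestor of $X^*$ but not of $X$ directly, so $C \perp X \mid X^*, A$ still holds; in (c) the unobserved $U$ affects only $A$ and $Y$, not $X$, so again the relevant independence is preserved. With both $\tau^*_a = \tau'_a$ and $\delta^*_a = \delta'_a$ established, Lemma 1 gives
\begin{equation*}
P_a(X^*=0 \mid X=1) = \frac{\tau^*_a - \tau_a}{\delta^*_a} = \frac{\tau'_a - \tau_a}{\delta'_a},
\end{equation*}
and since $\tau'_a$, $\tau_a$, and $\delta'_a$ are all functionals of observable distributions (by Assumption 3's identifiability conditions applied to $P^*$ and $P_a$), identifiability of the MR follows.
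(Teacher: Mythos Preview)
Your proposal is correct and follows essentially the same approach as the paper: establish $\tau^*_a=\tau'_a$ directly from Assumption~\ref{assumption:same-cate}, then establish $\delta^*_a=\delta'_a$ by combining CATE invariance with the equality $P_a(C=c\mid X=0)=P_a(C=c\mid X^*=0)$, which the paper derives via the law of total probability plus Assumption~\ref{assumption:optimal-misreporting} and the d-separation $C\perp X\mid X^*,A$---your subset argument for step~(i) is an equivalent and slightly cleaner phrasing of the same fact. The paper also states the identifiability of $\tau_a,\tau'_a,\delta'_a$ as a separate lemma, which you fold into your final sentence; both routes are fine.
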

The proof of Theorem ~\ref{thm:ident} relies on (1) the invariance of conditional causal effects of $X^{*}$ on $Y$ across $P_{a}$ and $P^{*}$ and (2) our assumptions about agent behavior to show that $\tau_{a}' = \tau_{a}^{*}$ and $\delta_{a}' = \delta_{a}^{*}$. We then show that the misreporting rate is identifiable as both $\delta_{a}'$ and $\tau_{a}'$ are identifiable from $\mathcal{D}$, $\mathcal{D^{*}}$, and standard causal effect assumptions. 

\textbf{Approach.} Guided by Theorem~\ref{thm:ident}, we now introduce our primary estimator -- the causal misreporting estimator (CMRE) -- which can estimate the misreporting rate for each agent.
CMRE proceeds by estimating the conditional average treatment effect (CATE) over $\mathcal{D}^{*}$ and for an agent $a$ over $\mathcal{D}$, which we denote as $\theta^*(c)$ and $\theta_a(c)$ using typical causal estimators such as S-learners, T-learners \cite{kunzel2019metalearners}, double/debiase ML methods \cite{chernozhukov2018double, nie2021quasi} and doubly robust estimators \cite{kennedy2023towards}.
We demonstrate the estimation process using S-learners, which we used for our empirical analysis in Section \ref{section:empirical}. Specifically, for each agent $a$ in $\mathcal{D}$ we estimate the following 
\begin{align}\label{eq:theta_est}
    f_a(c, x) = \arg \min_{f \in \mathcal{F}} \frac{1}{N_a} \sum_{i : i \in \mathcal{D}, a_i =a} \ell(f(c_i, x_i), y_i), \quad \text{and} \quad \theta_a(c) := f_a(c, 1) - f_a(c, 0)
\end{align}
where $\mathcal{F}$ is a suitable function class, $\ell$ is some loss function and $N_a$ is the number of data points in $\mathcal{D}$ for which $A=a$.
Similarly, we can get an estimate of $\theta^*(c)$: 
\begin{align}\label{eq:theta_star_est}
    f^*(c, x^{*}) = \arg \min_{f \in \mathcal{F}} \frac{1}{M} \sum_{i : i \in \mathcal{D}^{*}} \ell(f(c_i, x^{*}_i), y_i) \quad \text{and} \quad \theta^*(c) := f^*(c, 1) - f^*(c, 0).
\end{align}

Finally, our estimates for $\tau'_a, \tau_a$ and $\delta'$ can be computed as follows, where $N_{ax}$ is the number of datapoints in $\mathcal{D}$ where $A=a$ and $X=x$: 
\begin{align}
    \hat{\tau}'_a = \frac{1}{N_{a1}} \sum_{\substack{i : i \in \mathcal{D}, x_i =1, \\ a_i = a}} \theta^{*}(c_{i})  , \quad 
    \hat{\tau}_{a} = \frac{1}{N_{a1}} \sum_{\substack{i : i \in \mathcal{D}, x_i =1, \\ a_i = a}}  \theta(c_{i}) \label{eq:tau} , \quad \hat{\delta}'_{a} = \frac{1}{N_{a0}} \sum_{\substack{i : i \in \mathcal{D}, x_i =0, \\ a_i = a}}  \theta^{*}(c_{i}). \\ 
     \label{eq:delta} 
\end{align}
Our full algorithm is summarized in Appendix \ref{appendix:baselines}.

\subsection{Analyzing the variance of our estimator} \label{subsection:variance} 
In this section, we analyze the asymptotic variance of our estimator to give guidance as to how to obtain a low-variance estimator of the MR. We aim to show that if multiple causal descendants of $X^{*}$ are known, the causal descendent with the strongest causal relationship to $X^{*}$ should be used as it will result in the lowest asymptotic variance. For simplicity, we limit our analysis to a setting where the MR can be estimated from parametric estimators $\hat{\tau}_{a}$, $\hat{\tau}'_{a}$, and $\hat{\delta}'_{a}$ that are asymptotically normal. We characterize the asymptotic variance of a MR estimator under these assumptions in Theorem \ref{thm:variance}.

\begin{thmthm} [Variance] \label{thm:variance}
    Let $\hat{\tau}_{a}$, $\hat{\tau}'_{a}$, and $\hat{\delta}'_{a}$ be asymptotically normal estimators with an asymptotic variance of $\sigma_{\tau_{a}}^{2}$, $\sigma_{\tau'_{a}}^{2}$, and $\sigma_{\delta'_{a}}^{2}$. Also let $\sigma_{\tau_{a} \tau'_{a}}$, $\sigma_{\tau_{a} \delta'_{a}}$, and $\sigma_{\delta'_{a} \tau'_{a}}$ denote the covariance between the estimators and $\xrightarrow[]{d}$ denote convergence in distribution. Suppose that $N=M=n$, then for $\delta'_{a} \neq 0$ and $\hat{\delta}'_{a} \neq 0$,
    \[
    \sqrt{n} [\frac{\hat{\tau}'_{a} - \hat{\tau}_{a}}{\hat{\delta}'_{a}} - \frac{\tau'_{a} - \tau_{a}}{\delta'_{a}}] \xrightarrow[]{d} \mathcal{N}(0, \frac{\sigma^{2}_{{\tau'_{a}}}  + \sigma^{2}_{\tau_{a}}  - 2 \sigma_{\tau'_{a}\tau_{a}}}{{\delta'_{a}}^{2}} + 2 \frac{\tau_{a} - \tau'_{a}}{{\delta'_{a}}^{3}} (\sigma_{\tau'_{a}\delta'_{a}} - \sigma_{\tau_{a} \delta'_{a}}) +  \frac{(\tau_{a} - \tau'_{a})^{2}}{{\delta'_{a}}^{4}} \sigma^{2}_{\delta'_{a}})
    \]
\end{thmthm}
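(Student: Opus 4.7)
The plan is to prove this by applying the multivariate delta method to the function $g(t', t, d) := (t' - t)/d$ evaluated at the joint asymptotically normal vector $(\hat{\tau}'_a, \hat{\tau}_a, \hat{\delta}'_a)$. This is the natural strategy because the estimator in question is exactly $g(\hat{\tau}'_a, \hat{\tau}_a, \hat{\delta}'_a)$, and its probability limit under the assumptions of Theorem~\ref{thm:ident} is $g(\tau'_a, \tau_a, \delta'_a) = P_a(X^* = 0 \mid X = 1)$.

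First, I would upgrade the marginal asymptotic normality of each estimator to joint asymptotic normality of the vector $\hat{\vartheta} := (\hat{\tau}'_a, \hat{\tau}_a, \hat{\delta}'_a)$. Concretely, I would assume (or note is implicit in the hypothesis) that
\begin{equation*}
\sqrt{n}\,(\hat{\vartheta} - \vartheta) \xrightarrow{d} \mathcal{N}(0, \Sigma), \qquad \Sigma = \begin{pmatrix} \sigma^2_{\tau'_a} & \sigma_{\tau_a \tau'_a} & \sigma_{\delta'_a \tau'_a} \\ \sigma_{\tau_a \tau'_a} & \sigma^2_{\tau_a} & \sigma_{\tau_a \delta'_a} \\ \sigma_{\delta'_a \tau'_a} & \sigma_{\tau_a \delta'_a} & \sigma^2_{\delta'_a} \end{pmatrix},
\end{equation*}
where $\vartheta = (\tau'_a, \tau_a, \delta'_a)$. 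Joint normality is standard whenever the three estimators admit a common influence function representation (e.g., when all three are M-estimators over a shared sample), which is the regime the theorem targets; I would either state this as part of the hypothesis or derive it by stacking the influence functions.

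Second, I would verify that $g$ is continuously differentiable in a neighborhood of $\vartheta$. Since $\delta'_a \neq 0$, this holds, and the gradient is
\begin{equation*}
\nabla g(t', t, d) = \Bigl(\tfrac{1}{d},\ -\tfrac{1}{d},\ -\tfrac{t' - t}{d^{2}}\Bigr).
\end{equation*}
By the multivariate delta method, $\sqrt{n}\,(g(\hat{\vartheta}) - g(\vartheta)) \xrightarrow{d} \mathcal{N}\bigl(0,\ \nabla g(\vartheta)^\top \Sigma\, \nabla g(\vartheta)\bigr)$.

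Third, I would expand $\nabla g(\vartheta)^\top \Sigma\, \nabla g(\vartheta)$ term by term and match it to the claimed expression. The diagonal terms give $\sigma^2_{\tau'_a}/{\delta'_a}^2 + \sigma^2_{\tau_a}/{\delta'_a}^2 + (\tau'_a-\tau_a)^2 \sigma^2_{\delta'_a}/{\delta'_a}^4$. The $(\tau'_a, \tau_a)$ cross term contributes $-2\sigma_{\tau'_a \tau_a}/{\delta'_a}^2$. The $(\tau'_a, \delta'_a)$ and $(\tau_a, \delta'_a)$ cross terms give $-2(\tau'_a - \tau_a)\sigma_{\tau'_a \delta'_a}/{\delta'_a}^3$ and $+2(\tau'_a - \tau_a)\sigma_{\tau_a \delta'_a}/{\delta'_a}^3$; rewriting $\tau'_a - \tau_a = -(\tau_a - \tau'_a)$ and combining, this equals $2(\tau_a - \tau'_a)(\sigma_{\tau'_a \delta'_a} - \sigma_{\tau_a \delta'_a})/{\delta'_a}^3$, matching the middle term of the stated variance. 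Assembling these pieces yields the formula exactly.

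The main obstacle is the first step, namely justifying \emph{joint} asymptotic normality from marginal asymptotic normality. The delta method itself and the algebraic expansion are routine once joint normality is in hand; in contrast, one must either add a joint CLT assumption or argue case by case (e.g., by appealing to a stacked estimating equations / influence function argument for the particular S-learner, DR, or double-ML estimators used). I would flag this as an assumption in the statement or in a brief remark, rather than attempt a general derivation, since the theorem is presented under the umbrella condition that the three estimators are asymptotically normal.
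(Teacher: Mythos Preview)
Your proposal is correct and follows essentially the same route as the paper: define $g(t',t,d)=(t'-t)/d$, apply the multivariate delta method at $(\tau'_a,\tau_a,\delta'_a)$, and expand $\nabla g^\top \Sigma \,\nabla g$ to obtain the stated variance. In fact you are slightly more careful than the paper in flagging that the hypothesis really requires \emph{joint} asymptotic normality of $(\hat{\tau}'_a,\hat{\tau}_a,\hat{\delta}'_a)$ rather than only marginal normality; the paper simply writes down the full covariance matrix $\Sigma$ and proceeds with the delta method without commenting on this point.
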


Theorem \ref{thm:variance} shows that the asymptotic variance will increase significantly as $\delta'_{a} \rightarrow 0$ as each term in the variance is divided by either ${\delta'_{a}}^{2}$, ${\delta'_{a}}^{3}$, and ${\delta'_{a}}^{4}$. Thus, the causal descendent with the largest estimate of $\delta'_{a}$ should be used to calculate the MR.

\section{Empirical Results} \label{section:empirical}

We evaluate the performance of our approach (CMRE) on semi-synthetic and real-world data. We show that CMRE consistently yields reliable estimates of the MR, even when genuine modification is present, and outperforms relevant baselines. 

\textbf{Baseline Algorithms}
We compare CMRE against the following baselines.
\begin{enumerate*}[label=(\arabic*)]
    \item Natural Direct Effect Estimator (\textbf{NDEE}): this estimates the natural direct effect of the agent $A$ on the feature $X$ as a proxy for MR. We expect it to fail when genuine modification is non-zero, as it cannot distinguish between the direct causal effect of $A$ (misreporting) and the causal effect mediated through $X^*$ (genuine modification) without access to $X^{*}$.
    \item Naive Misreporting Estimator (\textbf{NMRE}): this is similar to our approach but doesn't control for confounding between $X^*$ and $Y$. It is used to highlight the importance of controlling for confounding when estimating the MR.
    \item One-Class SVM (\textbf{OC-SVM}): an anomaly detection approach trained on $\mathcal{D}^{*}$ where $X^{*}=1$. It then detects which data points in $\mathcal{D}$ where where $X=1$ are outliers. It highlights the limitations of using anomaly detection methods for MR estimation.
\end{enumerate*} 
Both CMRE and NDEE use S-learners \cite{kunzel2019metalearners} with an XGBoost \cite{chen2016xgboost} for causal effect estimation. NMRE relies on a difference-in-means estimator since it doesn't control for confounding. Additional implementation details are in Appendix \ref{appendix:empirical-info}.

\subsection{Semi-synthetic data experiments: Loan fraud}
We simulate a scenario where loan applicants may misreport their employment status to improve their chances of loan approval. Here, we focus on a simple setting where we're interested in the overall MR rather than the MR for a specific agent. We leave the multi-agent analysis to Section~\ref{sec:medicare_empirical}.

We simulate a setting where $\mathcal{D}^*$ was collected before a model used to estimate the risk of default was deployed and $\mathcal{D}$ is the data collected after. We extract the confounders from a real credit card dataset ($n=30,000$) \cite{default_of_credit_card_clients_350, yeh2009comparisons}, with $C = (C_A, C_S, C_E, C_M)$ denoting age, sex, education, and marital status. We simulate the outcome $Y$, true employment status $X^*$, reported employment status $X$, and agent identity $A$ as follows:

\begin{align*}
A_{i} &\sim \text{Bernoulli}(0.1 + 0.3 (1-{C_{S}}_{i}) + 0.3 (1-{C_{M}}_{i})), \\
X^{*}_{i} &\sim \text{Bernoulli}(0.05 + 0.05 {C_{E}}_{i} + 0.3 {C_{S}}_{i} {C_{M}}_{i} + 0.1 {C_{A}}_{i}^{2} + \beta_{A} A_{i}),  \\
Y_{i} &\sim \text{Bernoulli}(0.05 + 0.05 {C_{E}}_{i} + 0.3 {C_{S}}_{i} {C_{M}}_{i} + 0.1 {C_{A}}_{i}^{2} + \beta_{X^{*}} X^*_{i}), \\
X_{i} & \sim X_{i}^* + A_{i} (1-X_{i}^*) \text{Bernoulli} (\mu), 
\end{align*}
where $\mu$ is picked to target a desired $\text{MR}$ (default = 0.2) and we set $\beta_{A}=0.3$ and $\beta_{X^{*}}=0.4$ unless otherwise specified. All samples with $A_i =1$ are assigned to $\mathcal{D}$ and the rest to $\mathcal{D}^{*}$. 

Each experiment is repeated 100 times each with a different draw of $A, X, X^*$, and $Y$. We use an 80/20 train/test split of $\mathcal{D}$ where our models are trained on the larger split and the MR is estimated using the smaller split. We do not split $\mathcal{D}^{*}$ as it is only used to train the models. We present the means and standard deviations of the MR estimates across these 100 runs. Additional results and details can be found in Appendix \ref{appendix:empirical-info}.

\begin{figure*}
    \center
    \scalebox{0.34}{
    \includegraphics{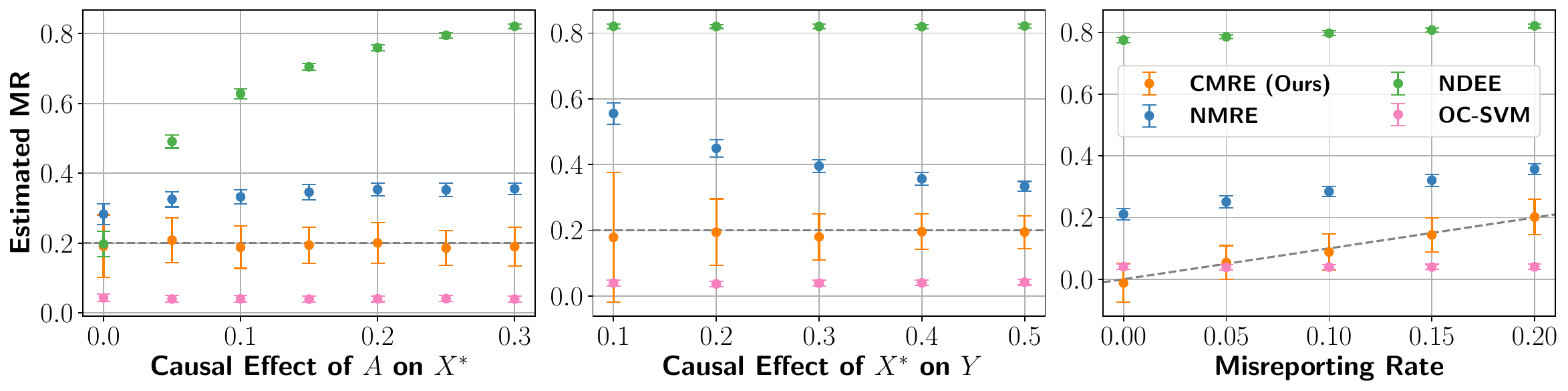}
    }
    \caption{Results from the loan fraud dataset. The $x$-axis is the causal effect of $A$ on $X^{*}$ \textbf{(left)}, causal effect of $X^*$ on $Y$ \textbf{(middle)}, and the misreporting rate \textbf{(right)}. The $y$-axis is the estimated misreporting rate. Dashed lines represent the true misreporting rate and the error bars represent the standard deviation. Our approach (CMRE) accurately estimates the MR for all levels of genuine modification, the causal effect of $X^*$ on $Y$, and misreporting rates. The variance for our estimator depends on the magnitude of the causal effect of $X^*$ on $Y$. Baselines that do not adjust for confounding (NMRE) or do not distinguish between genuine modification and misreporting (NDEE) give biased estimates in various cases. Anomaly detection methods (OC-SVM) are unable to distinguish misreported data points from unmanipulated data points.
    }
    \label{fig:synthetic-experiments}
\end{figure*}

\textbf{Varying the amount of genuine modification.}
First, we examine how changes in genuine modification affect the MR estimates, highlighting the need to account for genuine modification when estimating the MR. To vary the amount of genuine modification,  we simulate 7 settings with $\beta_{A}$ between 0.0 and 0.3. 

The results are shown in Figure \ref{fig:synthetic-experiments} (left), where the $x$-axis shows the different values of $\beta_{A}$ and the $y$-axis is the estimated MR. The dashed line shows the true value of the MR. 
The results show that our approach (CMRE) gives unbiased, stable estimates of the MR that are unaffected by the level of genuine modification. This signals that CMRE can successfully disentangle misreporting from genuine modification.  
By contrast, genuine modification affects the estimates from NMRE and NDEE. Notably, NMRE generally gives biased estimates because it does not control for confounding. As genuine modification increases, NMRE gives worse estimates due to a shift in the relationship between $X^{*}$ and $C$ across $\mathcal{D}$ and $\mathcal{D}^*$ that NMRE does not control for by not conditioning on $C$.
Despite being rooted in mediation analysis, NDEE is unable to disentangle the direct causal effect of $A$ on $X$ from the effect mediated through $X^*$ without access to $X^{*}$. This means that it gives an unbiased estimate \emph{only} when genuine modification is zero, i.e., there is no path from $A$ to $X$ through $X^{*}$. 
OC-SVM gives biased estimates for all levels of genuine adaptation because the misreported data points are still plausible under $\mathcal{D}^*$.

\textbf{Varying the causal effect of $X^*$ on $Y$.} Next, we empirically test our characterization of the variance of our estimator (Theorem~\ref{thm:variance}) by varying the causal effect of $X^*$ on $Y$. 
To vary the causal effect of $X^{*}$ on $Y$, we simulate 5 settings with the causal effect of $X^*$ on $Y$, $\beta_{X^{*}}=[0.1, 0.2, 0.3, 0.4, 0.5]$.

The results are shown in Figure \ref{fig:synthetic-experiments} (middle), where the $x$-axis shows the causal effect of $X^*$ on $Y$, the $y$-axis represents the estimated MR, and the dashed line denotes the true MR. The results show that our approach (CMRE) is unbiased, regardless of what the causal effect between the feature and outcome is. However, small causal effects result in high-variance estimates for CMRE consistent with Theorem \ref{thm:variance}. The NMRE estimates are biased but do improve as the causal effect of $X^{*}$ on $Y$ increases, as the unobserved confounders have a diminished contribution towards the full estimate as the causal effect $X^{*}$ on $Y$ itself increases. In contrast, the estimates of NDEE do not vary, as the causal effect of $X^{*}$ on $Y$ has no impact on the causal effect of $A$ on $X$. Similarly, the estimates of OC-SVM are biased but invariant because the misreported data points are still plausible under $\mathcal{D}^*$.

\textbf{Varying the misreporting rate.} Finally, we consider how varying the true MR affects our approach and the baselines. To that end, we simulate 5 datasets with a MR between 0 and 0.2. 

Figure \ref{fig:synthetic-experiments} (right) shows the results.
The $x$-axis and the dashed line show true MR whereas the $y$-axis shows the estimated MR. 
The results show that, unlike the baselines, our approach is always able to accurately estimate the MR. 
NDEE gives biased estimates since the genuine modification is non-zero. OC-SVM is invariant to the MR, indicating that it is completely unable to distinguish between normal and misreporting samples.
The inaccuracy in NMRE arises due to uncontrolled confounding whereas by controlling for all confounders, CMRE leads to accurate estimates.

\subsection{Real data experiments: Misreporting in Medicare insurance claims}\label{sec:medicare_empirical}

Next, we highlight the utility of our approach in a real data setting. We aim to identify if private insurers misreport enrollees' diagnoses to secure higher government payouts, and which insurers have a higher misreporting rate. 

\textbf{Background.} The government calculates payouts using a public risk adjustment model based on enrollee demographics (age, sex, race) and current diagnoses ($X^*$), as measured by Hierarchical Condition Categories (HCCs) at year ($t-1$) \cite{pope2004risk} to predict future healthcare costs. This model is trained on Traditional Medicare (TM) enrollees, who only use government insurance, which provides an unmanipulated dataset, $\mathcal{D}^*$, as there's no incentive to misreport. 
In contrast, data from private insurers, $\mathcal{D}$, may be manipulated. 
We expect to find evidence of misreporting or ``upcoding'' of HCCs included in the risk adjustment model by private insurers, consistent with existing literature \cite{geruso2020upcoding, silverman2004medicare}. Estimating misreporting rates has been difficult, as private insurers claim diagnosis rate differences result from their improved care (i.e., genuine modification). For instance, private insurers may provide greater access to wellness benefits, screening assessments, and home-based care that result in an improvement of various conditions, while also identifying conditions that would have remained unreported in TM.  Our proposed estimator can make progress toward resolving this long-standing issue by distinguishing between genuine modification and misreporting.

\textbf{Setup.} We use mortality during year $t$ as the downstream outcome $Y$ and consider the set of confounders to be enrollee demographics as well as HCCs measured in the previous year, $t-1$. To ensure that $C$ is unmanipulated, we consider only the population of ``switchers'': enrollees who were covered by the government in year $t-1$ but switched to a private insurer in year $t$. For these enrollees, their possible confounding HCCs ($C$) were reported during their government coverage but the HCCs used to calculate risk ($X$) were reported during their private coverage. 

Here, we do not have access to the ground truth MR. Instead, we gauge the quality of our MR estimates for HCCs included in the payment model by comparing them to estimates of non-payment HCCs: diagnoses that are not included in the government's risk adjustment models. Private insurers have no incentive to misreport these HCCs and we expect the true MR for the non-payment HCCs to be zero. To obtain a 95\% confidence interval for our estimator, we used bootstrapping with 100 samples. To ensure a small interval, as represented by the error bars, we only considered the payment and nonpayment HCCs where at least 1\% of switcher enrollees in year $t$ had the HCC code, and the causal effect of the HCC on $Y$, as estimated using $\mathcal{D}^{*}$, is greater than 0.1 (consistent with Theorem~\ref{thm:variance}).
We conduct two sets of analysis: in the first, we present the overall misreporting rate for all insurance companies (i.e., all agents) on the top two payment and non-payment HCCs with the highest causal effect estimates on Y. Second, we look at variations in the MR across multiple agents, i.e., multiple insurance companies. Results for OC-SVM are only in the Appendix, as the method fails to distinguish between normal and misreported samples, consistent with observations in the semi-synthetic results.
We include additional results and details in Appendix \ref{appendix:additional-experiments} and \ref{appendix:empirical-info}. 

\begin{figure*}
    \center
    \scalebox{0.32}{
    \includegraphics{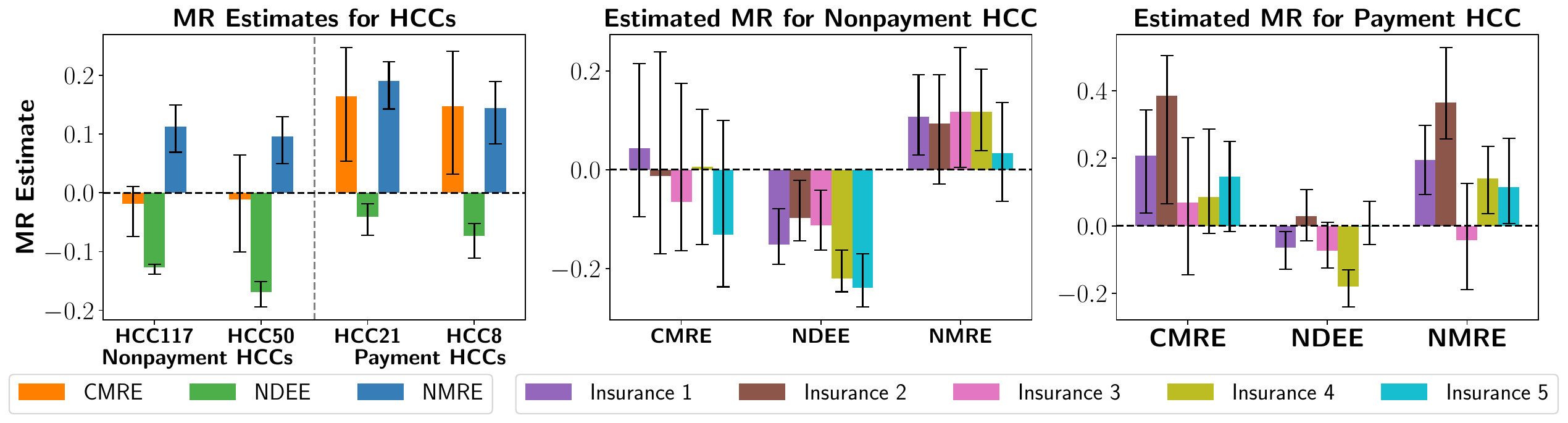}
    }
    \caption{For each plot, the $y$-axis represents the estimated MR for an HCC code and the error bars represent a 95\% confidence interval. \textbf{(Left)} The $x$-axis has two nonpayment HCCs (HCC117 and HCC50) and two payment HCCs (HCC21 and HCC8). Our approach (CMRE) has a MR estimate close to zero for nonpayment HCCs and significantly above zero for the payment HCCs, which aligns with what is expected in current literature. Baselines that fail to distinguish genuine modification from strategic adaptation (NDEE) seem to underestimate the MR and baselines that do not control for confounding (NMRE) seem to overestimate the MR. \textbf{(Middle and Right)} The $x$-axis represents the baselines. The middle plot represents estimates for HCC50 and the right plot represents MR estimates for HCC8 across different private insurers (agents). Similar to the left plot, NDEE seems to underestimate the MR across most agents and NMRE overestimates.
    }
    \label{fig:ma-experiments}
\end{figure*}

\textbf{Results.} Figure \ref{fig:ma-experiments} (left) shows the MR estimates for all insurance companies for two nonpayment HCCs (left of the vertical dashed line) and two payment HCCs (right of the vertical line). Our approach (CMRE) is the only approach that passes the sanity check: it gives MR estimates that are not statistically significantly different from zero for the nonpayment HCCs. This is consistent with our expectation that private insurers have no incentive to misreport nonpayment HCCs. 
Conversely, CMRE estimates significantly high misreporting rates for both of the payment HCCs. These results are validated in the health policy literature. 
For instance, HCC21 (Protein Calorie Malnutrition), has long been suspected to be improperly reported as it is reported at much higher rates than TM, leading it to be removed from the risk adjustment model years after our data was collected \cite{carlin2024mechanics, kronick2014measuring}. In contrast, NMRE estimates a high misreporting rate for all HCCs, NDEE estimates a negative misreporting rate for all HCCs, which does not align with what is expected. All OC-SVM results are included only in Appendix \ref{appendix:additional-experiments} since they are consistently incorrect. 

For the agent-level analysis, we focus on the top payment and the top nonpayment HCC to simplify our visualizations. Figure \ref{fig:ma-experiments} (middle and right) show that the main conclusions from the aggregate level analysis hold on the agent-level: our approach is the only approach that reports MR rates not statistically significantly different from zero for nonpayment HCCs across all insurance companies. Our approach also estimates higher MRs for payment HCCs across all agents. It estimates that 2 out of the 5 insurance companies have misreporting rates that are statistically significantly different from zero.
None of the baselines give consistently reliable estimates for the nonpayment HCCs. NDEE continues to underestimate the MR whereas NMRE overestimates the MR.

\vspace{-1em}
\section{Conclusion}
\vspace{-1em}
\label{section-discussion}
In this work, we propose a causal approach to estimating how much strategic agents misreport their features. We show that the misreporting rate is fully identifiable by comparing causal effect estimates between a possibly manipulated and unmanipulated dataset. We also analyze the variance of our estimator, showing that a decision maker can accurately estimate the misreporting rate of a feature given a causal descendent with a strong causal relationship. We highlight the utility of our approach across empirical experiments over a semi-synthetic and a real Medicare dataset.

\textbf{Limitations and Broader Impacts.}
 A limitation of our work is that it requires the assumption of no unobserved confounding, which is not statistically testable. Future work can consider adapting sensitivity analyses from the causal literature to relax this assumption \cite{yadlowsky2022bounds, kallus2019interval, jesson2021quantifying}. We acknowledge that this method could be used to disproportionately target certain organizations that may not have the resources to adequately respond to claims of misreporting. To mitigate this issue, we recommend the integration of impact assessments to be used alongside our method to mitigate such disparities.

\section{Acknowledgements}
We thank Claudia Shi for valuable feedback. The authors are supported by a grant from Schmidt Futures (Award No. 70960). The funders had
no role in the study design, analysis of results, decision to publish, or preparation of the manuscript.
This study was deemed exempt and not regulated by the University of Michigan institutional review
board (IRBMED; HUM00230364).

\bibliographystyle{plainnat}
\bibliography{bibliography}


\appendix

\section{Additional DAGs} \label{appendix:additional-dags}

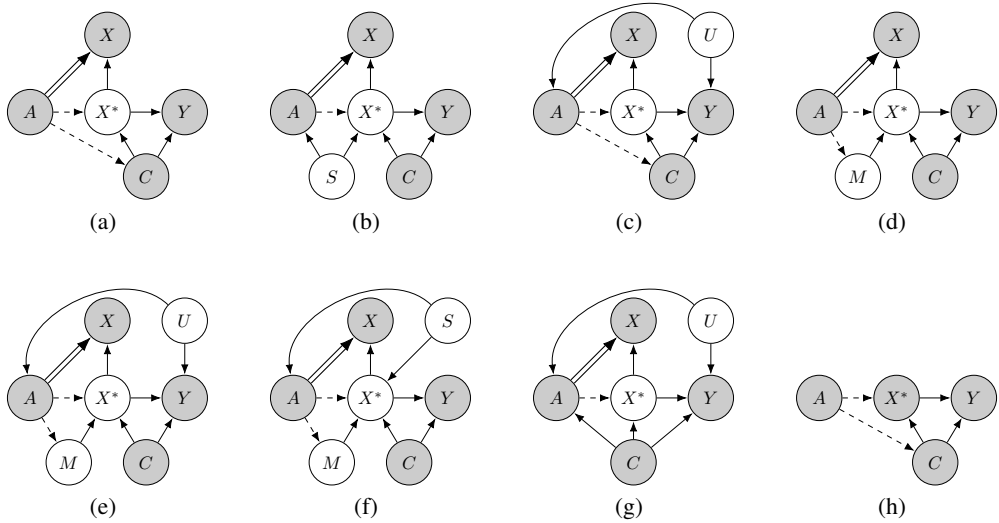
\begin{figure*}[!htb]
\begin{center}
\subfigure[]{
\resizebox{0.2\textwidth}{!}{
\begin{tikzpicture}
\tikzset{
    -Latex,auto,node distance =1 cm and 1 cm,semithick,
    state/.style ={circle, draw, minimum width = 0.88 cm},
    point/.style = {circle, draw, inner sep=0.04cm,fill,node contents={}},
    bidirected/.style={Latex-Latex,dashed},
    el/.style = {inner sep=2pt, align=left, sloped}}

    \node[state] (a) at (0,0) [fill=light-grey]{$A$};
    \node[state] (xstar) at (1.5,0) {$X^{*}$};
    \node[state] (x) at (1.5,1.5) [fill = light-grey] {$X$};
    \node[state] (y) at (3,0) [fill = light-grey] {$Y$};
    \node[state] (c) at (2.25,-1.25) [fill = light-grey] {$C$};

    \path (a) edge[dashed] (xstar);
    \path (xstar) edge (x);
    \path (c) edge (xstar);
    \path (c) edge (y);
    \path (xstar) edge (y);
    \path (a) edge[double, double distance=2pt] (x);
    \path (a) edge[dashed] (c);
\end{tikzpicture}
}
\label{app:dag-a}
}
\hspace{1.0em}
\subfigure[]{
\resizebox{0.2\textwidth}{!}{
\begin{tikzpicture}
\tikzset{
    -Latex,auto,node distance =1 cm and 1 cm,semithick,
    state/.style ={circle, draw, minimum width = 0.88 cm},
    point/.style = {circle, draw, inner sep=0.04cm,fill,node contents={}},
    bidirected/.style={Latex-Latex,dashed},
    el/.style = {inner sep=2pt, align=left, sloped}}

    \node[state] (a) at (0,0) [fill=light-grey]{$A$};
    \node[state] (xstar) at (1.5,0) {$X^{*}$};
    \node[state] (x) at (1.5,1.5) [fill = light-grey] {$X$};
    \node[state] (y) at (3,0) [fill = light-grey] {$Y$};
    \node[state] (c) at (2.25,-1.25) [fill = light-grey] {$C$};
    \node[state] (s) at (0.75,-1.25) [] {$S$};

    \path (a) edge[dashed] (xstar);
    \path (xstar) edge (x);
    \path (c) edge (xstar);
    \path (c) edge (y);
    \path (xstar) edge (y);
    \path (a) edge[double, double distance=2pt] (x);
    \path (s) edge (a);
    \path (s) edge (xstar);
\end{tikzpicture}
}
\label{app:dag-b}
}
\hspace{1.0em}
\subfigure[]{
\resizebox{0.2\textwidth}{!}{
\begin{tikzpicture}
\tikzset{
    -Latex,auto,node distance =1 cm and 1 cm,semithick,
    state/.style ={circle, draw, minimum width = 0.88 cm},
    point/.style = {circle, draw, inner sep=0.04cm,fill,node contents={}},
    bidirected/.style={Latex-Latex,dashed},
    el/.style = {inner sep=2pt, align=left, sloped}}

    \node[state] (a) at (0,0) [fill=light-grey]{$A$};
    \node[state] (xstar) at (1.5,0) {$X^{*}$};
    \node[state] (x) at (1.5,1.5) [fill = light-grey] {$X$};
    \node[state] (y) at (3,0) [fill = light-grey] {$Y$};
    \node[state] (c) at (2.25,-1.25) [fill = light-grey] {$C$};
    \node[state] (u) at (3,1.5) [] {$U$};

    \path (a) edge[dashed] (xstar);
    \path (xstar) edge (x);
    \path (c) edge (xstar);
    \path (c) edge (y);
    \path (xstar) edge (y);
    \path (a) edge[double, double distance=2pt] (x);
    \path (a) edge[dashed] (c);
    \path (u) edge (y);
    \path (u) edge[bend right=70] (a);
\end{tikzpicture}
}
\label{app:dag-c}
}
\hspace{1.0em}
\subfigure[]{
\resizebox{0.2\textwidth}{!}{
\begin{tikzpicture}
\tikzset{
    -Latex,auto,node distance =1 cm and 1 cm,semithick,
    state/.style ={circle, draw, minimum width = 0.88 cm},
    point/.style = {circle, draw, inner sep=0.04cm,fill,node contents={}},
    bidirected/.style={Latex-Latex,dashed},
    el/.style = {inner sep=2pt, align=left, sloped}}

    \node[state] (a) at (0,0) [fill=light-grey]{$A$};
    \node[state] (xstar) at (1.5,0) {$X^{*}$};
    \node[state] (x) at (1.5,1.5) [fill = light-grey] {$X$};
    \node[state] (y) at (3,0) [fill = light-grey] {$Y$};
    \node[state] (m) at (0.75,-1.25) [] {$M$};
    \node[state] (c) at (2.25,-1.25) [fill=light-grey] {$C$};

    \path (a) edge[dashed] (xstar);
    \path (xstar) edge (x);
    \path (m) edge (xstar);
    \path (xstar) edge (y);
    \path (c) edge (xstar);
    \path (c) edge (y);
    \path (a) edge[double, double distance=2pt] (x);
    \path (a) edge[dashed] (m);
\end{tikzpicture}
}
\label{app:dag-d}
}
\hspace{1.0em}
\subfigure[]{
\resizebox{0.2\textwidth}{!}{
\begin{tikzpicture}
\tikzset{
    -Latex,auto,node distance =1 cm and 1 cm,semithick,
    state/.style ={circle, draw, minimum width = 0.88 cm},
    point/.style = {circle, draw, inner sep=0.04cm,fill,node contents={}},
    bidirected/.style={Latex-Latex,dashed},
    el/.style = {inner sep=2pt, align=left, sloped}}

    \node[state] (a) at (0,0) [fill=light-grey]{$A$};
    \node[state] (xstar) at (1.5,0) {$X^{*}$};
    \node[state] (x) at (1.5,1.5) [fill = light-grey] {$X$};
    \node[state] (y) at (3,0) [fill = light-grey] {$Y$};
    \node[state] (c) at (2.25,-1.25) [fill = light-grey] {$C$};
    \node[state] (u) at (3,1.5) [] {$U$};
    \node[state] (m) at (0.75,-1.25) [] {$M$};

    \path (a) edge[dashed] (xstar);
    \path (xstar) edge (x);
    \path (c) edge (xstar);
    \path (c) edge (y);
    \path (xstar) edge (y);
    \path (a) edge[double, double distance=2pt] (x);
    \path (u) edge (y);
    \path (u) edge[bend right=70] (a);
    \path (m) edge (xstar);
    \path (a) edge[dashed] (m);
\end{tikzpicture}
}
\label{app:dag-e}
}
\hspace{1.0em}
\subfigure[]{
\resizebox{0.2\textwidth}{!}{
\begin{tikzpicture}
\tikzset{
    -Latex,auto,node distance =1 cm and 1 cm,semithick,
    state/.style ={circle, draw, minimum width = 0.88 cm},
    point/.style = {circle, draw, inner sep=0.04cm,fill,node contents={}},
    bidirected/.style={Latex-Latex,dashed},
    el/.style = {inner sep=2pt, align=left, sloped}}

    \node[state] (a) at (0,0) [fill=light-grey]{$A$};
    \node[state] (xstar) at (1.5,0) {$X^{*}$};
    \node[state] (x) at (1.5,1.5) [fill = light-grey] {$X$};
    \node[state] (y) at (3,0) [fill = light-grey] {$Y$};
    \node[state] (c) at (2.25,-1.25) [fill = light-grey] {$C$};
    \node[state] (u) at (3,1.5) [] {$S$};
    \node[state] (m) at (0.75,-1.25) [] {$M$};

    \path (a) edge[dashed] (xstar);
    \path (xstar) edge (x);
    \path (c) edge (xstar);
    \path (c) edge (y);
    \path (xstar) edge (y);
    \path (a) edge[double, double distance=2pt] (x);
    \path (u) edge (xstar);
    \path (u) edge[bend right=70] (a);
    \path (m) edge (xstar);
    \path (a) edge[dashed] (m);
\end{tikzpicture}
}
\label{app:dag-f}
}
\hspace{1.0em}
\subfigure[]{
\resizebox{0.2\textwidth}{!}{
\begin{tikzpicture}
\tikzset{
    -Latex,auto,node distance =1 cm and 1 cm,semithick,
    state/.style ={circle, draw, minimum width = 0.88 cm},
    point/.style = {circle, draw, inner sep=0.04cm,fill,node contents={}},
    bidirected/.style={Latex-Latex,dashed},
    el/.style = {inner sep=2pt, align=left, sloped}}

    \node[state] (a) at (0,0) [fill=light-grey]{$A$};
    \node[state] (xstar) at (1.5,0) {$X^{*}$};
    \node[state] (x) at (1.5,1.5) [fill = light-grey] {$X$};
    \node[state] (y) at (3,0) [fill = light-grey] {$Y$};
    \node[state] (c) at (1.5,-1.25) [fill = light-grey] {$C$};
    \node[state] (u) at (3,1.5) [] {$U$};

    \path (a) edge[dashed] (xstar);
    \path (xstar) edge (x);
    \path (c) edge (xstar);
    \path (c) edge (y);
    \path (c) edge (a);
    \path (xstar) edge (y);
    \path (a) edge[double, double distance=2pt] (x);
    \path (u) edge (y);
    \path (u) edge[bend right=70] (a);
\end{tikzpicture}
}
\label{app:dag-g}
}
\hspace{1.0em}
\subfigure[]{
\resizebox{0.2\textwidth}{!}{
\begin{tikzpicture}
\tikzset{
    -Latex,auto,node distance =1 cm and 1 cm,semithick,
    state/.style ={circle, draw, minimum width = 0.88 cm},
    point/.style = {circle, draw, inner sep=0.04cm,fill,node contents={}},
    bidirected/.style={Latex-Latex,dashed},
    el/.style = {inner sep=2pt, align=left, sloped}
}
    \node[state] (a) at (0,0) [fill=light-grey]{$A$};
    \node[state] (xstar) at (1.5,0) [fill = light-grey] {$X^{*}$};
    \node[state] (y) at (3,0) [fill = light-grey] {$Y$};
    \node[state] (c) at (2.25,-1.25) [fill = light-grey] {$C$};

    \path (a) edge[dashed] (xstar);
    \path (c) edge (xstar);
    \path (c) edge (y);
    \path (xstar) edge (y);
    \path (a) edge[dashed] (c);
\end{tikzpicture}
}
\label{app:dag-h}
}

\caption{Causal DAGs that describe the setting of this paper. White nodes are unobserved, whereas grey nodes are observed. Double-line arrows represent misreporting, while dashed arrows represent genuine modification. 
DAGs (a)-(g) represent manipulated data-generating processes, while DAG (h) represents unmanipulated data.}
\label{fig:appendix-dags}
\end{center}
\end{figure*}

Figures \ref{app:dag-a}-\ref{app:dag-g} represent settings in which agents may either genuinely modify or misreport their features. In contrast, Figure \ref{app:dag-h} represents a scenario involving trustworthy agents that only genuinely modify their features. In all cases shown in Figures \ref{app:dag-a}-\ref{app:dag-g}, the decision maker lacks access to $X^{*}$ but observes $X$, $A$, $C$, and $Y$. While the main focus of the paper was on the DAG in Figure \ref{app:dag-a}, our findings extend naturally to the more complex settings depicted in Figures \ref{app:dag-b}-\ref{app:dag-g}.

Specifically, the DAGs in Figures \ref{app:dag-b} and \ref{app:dag-f} represent scenarios where some unknown confounding bias may exist between $A$ and $X^{*}$, e.g., $S$. In the context of the Medicare example discussed in the main text, this could arise if enrollees with more chronic conditions $(X^{*})$ are more likely to enroll in a private health insurance plan $(A)$. Notably, our approach doesn't require controlling for $S$, as it's not a confounder between $X^{*}$ and $Y$.

The DAGs in Figures \ref{app:dag-c}, \ref{app:dag-e}, and \ref{app:dag-g} illustrate settings where an unobserved confounder may influence both $A$ and $Y$. For example, this could occur if enrollees who prefer private insurance plans $(A)$ also happen to have worse health outcomes ($Y$). Again, our approach does not require controlling for $U$. Although $U$ is a confounder of $X^{*}$ and $Y$, the backdoor path can be blocked by conditioning on $A$, which means that an adjustment for $U$ is unnecessary.

Finally, the DAGs in Figures \ref{app:dag-d}-\ref{app:dag-f} capture settings where an agent may genuinely modify their features by intervening on a mediator $M$ that lies between $A$ and $X^{*}$. For example, this could occur if private health insurers ($A$) are more likely to offer free gym memberships ($M$), which influence the true health status of their enrollees ($X^{*}$). As before, our approach does not require any knowledge of the mediators an agent intervenes on in order to estimate the misreporting rate, as $M$ is not a confounder of $X^{*}$ and $Y$.

\section{Main Proofs} \label{appendix:main-proofs}

Each of the proofs within this Section assume that the dataset $\mathcal{D} \sim P_{a}$ is generated according to any one of the DAGs in Figures \ref{app:dag-a}-\ref{app:dag-g}.

\subsection{Proof for Lemma~1}

Lemma \ref{lemma:estimator} is important to build up to Theorem~\ref{thm:ident}. It shows that the MR can be estimated by comparing the true and nominal causal effects of $X^{*}$ on $Y$ and $X$ on $Y$. 

\begin{applemma}[Estimator for the misreporting rate; Lemma \ref{lemma:estimator} in the main text]
Let Assumptions \ref{assumption:optimal-misreporting}-\ref{assumption:id} hold. Then for $\delta^*_a \not = 0$, the MR can be expressed as: 
\[
P_{a}(X^{*}=0 | X=1) = \frac{\tau^{*}_a - \tau_a}{\delta^{*}_a}
\]
\end{applemma}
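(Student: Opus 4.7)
The plan is to rewrite $\mathbb{E}_{P_a}[Y \mid X{=}1, C{=}c]$ as a $C$-conditional mixture of the two potential-outcome means, weighted by the $C$-conditional misreporting rate $p(c) := P_a(X^*{=}0 \mid X{=}1, C{=}c)$, and then marginalize to reveal the structure of $\tau^*_a - \tau_a$.

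\textbf{Step 1 (mixture decomposition).} Condition on $X^*$ inside $\mathbb{E}_{P_a}[Y \mid X{=}1, C{=}c]$ via the law of total expectation to write
\begin{align*}
\mathbb{E}_{P_a}[Y \mid X{=}1, C{=}c] = p(c)\,\mathbb{E}_{P_a}[Y \mid X^*{=}0, X{=}1, C{=}c] + (1-p(c))\,\mathbb{E}_{P_a}[Y \mid X^*{=}1, X{=}1, C{=}c].
\end{align*}
Because $X$ is not a causal ancestor of $Y$ in any of the allowed DAGs (the double-arrow $A \twoheadrightarrow X$ plus $X^* \to X$ means $X$ is a pure descendant of $X^*$ and $A$), we have $Y \perp X \mid X^*, C$ under $P_a$, so the conditioning on $X{=}1$ can be dropped. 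Combining with Assumption~\ref{assumption:id} (no unmeasured confounding and consistency) gives $\mathbb{E}_{P_a}[Y \mid X^*{=}x^*, C{=}c] = \mathbb{E}_{P_a}[Y(x^*) \mid C{=}c]$. Similarly, Assumption~\ref{assumption:optimal-misreporting} forces $X^*{=}0$ whenever $X{=}0$, hence $\mathbb{E}_{P_a}[Y \mid X{=}0, C{=}c] = \mathbb{E}_{P_a}[Y(0) \mid C{=}c]$.

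\textbf{Step 2 (compute $\tau^*_a - \tau_a$).} Subtracting these two identities from the integrands defining $\tau^*_a$ and $\tau_a$ and simplifying yields
\begin{align*}
\tau^*_a - \tau_a = \int_C p(c)\bigl(\mathbb{E}_{P_a}[Y(1)\mid C{=}c] - \mathbb{E}_{P_a}[Y(0)\mid C{=}c]\bigr)\, P_a(C{=}c \mid X{=}1)\, dc.
\end{align*}

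\textbf{Step 3 (change of measure from $X{=}1$ to $X^*{=}0$).} Using Bayes' rule on $p(c)\,P_a(C{=}c\mid X{=}1)$, the weight equals $P_a(C{=}c, X^*{=}0 \mid X{=}1) = P_a(C{=}c \mid X^*{=}0, X{=}1)\,P_a(X^*{=}0 \mid X{=}1)$. The key structural step is to argue that $P_a(C{=}c \mid X^*{=}0, X{=}1) = P_a(C{=}c \mid X^*{=}0)$. This follows from the DAGs in Figure~\ref{fig:dag}: in every allowed graph, $X$'s only parents are $A$ and $X^*$, so $X \perp C \mid A, X^*$, and conditioning throughout on $A{=}a$ removes the $A$-dependence. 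Substituting this gives
\begin{align*}
\tau^*_a - \tau_a = P_a(X^*{=}0 \mid X{=}1) \int_C \bigl(\mathbb{E}_{P_a}[Y(1)\mid C{=}c] - \mathbb{E}_{P_a}[Y(0)\mid C{=}c]\bigr)\, P_a(C{=}c \mid X^*{=}0)\, dc,
\end{align*}
and the remaining integral is exactly $\delta^*_a$. Dividing by $\delta^*_a \ne 0$ yields the claim.

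The main obstacle will be Step~3, specifically justifying $P_a(C \mid X^*{=}0, X{=}1) = P_a(C \mid X^*{=}0)$ uniformly over the family of DAGs in Figure~\ref{fig:dag}(a)--(c); this is where the graphical assumption that $C$ is not a direct parent of $X$ (only indirectly through $X^*$) is doing real work, and it is what lets the misreported subpopulation within $X{=}1$ inherit the same $C$-distribution as the full $X^*{=}0$ group.
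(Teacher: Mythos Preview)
Your proposal is correct and follows essentially the same route as the paper's proof: both decompose the $X{=}1$ conditional mean by splitting on $X^*$, invoke $Y \perp X \mid X^*, A$ to drop the $X$-conditioning, use Assumption~\ref{assumption:optimal-misreporting} to handle the $X{=}0$ branch, and then apply $C \perp X \mid X^*, A$ to collapse $P_a(C\mid X^*{=}0, X{=}1)$ to $P_a(C\mid X^*{=}0)$ and pull out the scalar $P_a(X^*{=}0\mid X{=}1)$. The only cosmetic differences are that you convert to potential-outcome means immediately in Step~1 (the paper defers this to its Step~2), and you organize the argument around $\tau^*_a - \tau_a$ directly rather than first writing $\tau_a = \tau^*_a - (\text{bias term})$ and then rearranging.
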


\begin{proof}
Our proof proceeds in three main steps. First, we decompose $\tau_a$ into two terms: $\tau_a^*$ and an additional bias term. Second, we show that this additional term can be written as a function of our target estimand, $P(X^*=0 | X=1)$. Third and finally, we show that using simple algebra, we can express our target estimand as a function of $\tau^*_a, \tau_a$ and $\delta^*_a$

\paragraph{Step 1: Decomposing $\tau_a$ into $\tau_a^*$ and an additional term}

\begin{align*}
    \tau_{a} &= \int_{C} (\mathbb{E}_{P_{a}}[Y | X=1, C=c] - \mathbb{E}_{P_{a}}[Y | X=0, C=c]) P_{a}(C=c |X=1) dc \\
    &= \int_{C} \mathbb{E}_{P_{a}}[Y | X=1, C=c, X^{*}=1]P_{a}(X^{*}=1 | X=1, C=c) P_{a}(C=c |X=1) dc \\
    &+ \int_{C} \mathbb{E}_{P_{a}}[Y | X=1, C=c, X^{*}=0]P_{a}(X^{*}=0 | X=1, C=c) P_{a}(C=c |X=1) dc\\
    &- \int_{C} \mathbb{E}_{P_{a}}[Y | X=0, C=c] P_{a}(C=c |X=1) dc \\
    &= \int_{C} \mathbb{E}_{P_{a}}[Y | X^{*}=1, C=c](1 - P_{a}(X^{*}=0 | X=1, C=c)) P_{a}(C=c |X=1) dc \\
    &+ \int_{C} \mathbb{E}_{P_{a}}[Y | X^{*}=0, C=c]P_{a}(X^{*}=0 | X=1, C=c) P_{a}(C=c |X=1) dc\\
    &- \int_{C} \mathbb{E}_{P_{a}}[Y | X=0, C=c] P_{a}(C=c |X=1) dc \\
    &= \int_{C} \mathbb{E}_{P_{a}}[Y | X^{*}=1, C=c] P_{a}(C=c |X=1) dc\\
    &- \int_{C} \mathbb{E}_{P_{a}}[Y | X^{*}=1, C=c] P_{a}(X^{*}=0 | X=1, C=c) P_{a}(C=c |X=1)  dc \\
    &+ \int_{C} \mathbb{E}_{P_{a}}[Y | X^{*}=0, C=c]P_{a}(X^{*}=0 | X=1, C=c) P_{a}(C=c |X=1) dc\\
    &- \int_{C} \mathbb{E}_{P_{a}}[Y | X^{*}=0, C=c] P_{a}(C=c |X=1) dc \\
    &= \tau_{a}^{*} - \int (\mathbb{E}_{P_{a}}[Y | X^{*}=1, C] - \mathbb{E}_{P_{a}}[Y | X^{*}=0, C])P_{a}(X^{*}=0 | X=1, C) P_{a}(C |X=1) dc.
\end{align*}

The third equality holds as $Y \perp X | X^{*}, A$ for the DAGs in Figure \ref{app:dag-a}-\ref{app:dag-g} and the fourth equality holds due to Assumption \ref{assumption:optimal-misreporting}.

\paragraph{Step 2: Expressing the additional term as a function of $P(X^*=0 | X=1)$}
Next, to explicitly show that this additional term is a direct consequence of misreporting, we can rewrite it in terms of the misrepoting rate:

\begin{align*}
    &\int_{C} (\mathbb{E}_{P_{a}}[Y | X^{*}=1, C] - \mathbb{E}_{P_{a}}[Y | X^{*}=0, C]) P_{a}(X^{*}=0 | X=1, C) P_{a}(C |X=1) dc \\
    &= \int_{C} (\mathbb{E}_{P_{a}}[Y | X^{*}=1, C] - \mathbb{E}_{P_{a}}[Y | X^{*}=0, C]) \frac{P_{a}(X^{*}=0, C| X=1)}{P_{a}(C| X=1)} P_{a}(C |X=1) dc \\
    &= \int_{C} (\mathbb{E}_{P_{a}}[Y | X^{*}=1, C] - \mathbb{E}_{P_{a}}[Y | X^{*}=0, C]) P_{a}(X^{*}=0| X=1) P_{a}(C|X^{*}=0, X=1) dc \\
    &= P_{a}(X^{*}=0| X=1) \int_{C} (\mathbb{E}_{P_{a}}[Y | X^{*}=1, C] - \mathbb{E}_{P_{a}}[Y | X^{*}=0, C]) P_{a}(C|X^{*}=0, X=1) dc \\
    &= P_{a}(X^{*}=0| X=1) \int_{C} \mathbb{E}_{P_{a}}[Y | X^{*}=1, C] - \mathbb{E}_{P_{a}}[Y | X^{*}=0, C] P_{a}(C|X^{*}=0) dc \\
    &= P_{a}(X^{*}=0| X=1) \int_{C} \mathbb{E}_{P_{a}}[Y(X^{*}=1) | C] - \mathbb{E}_{P_{a}}[Y(X^{*}=0) | C=c] P_{a}(C|X^{*}=0) dc \\
    &= P_{a}(X^{*}=0| X=1) \delta^{*}_{a}.
\end{align*}

The fourth equality comes directly from the fact that $C \perp X | X^{*}, A$ for the DAGs in Figures \ref{app:dag-a}-\ref{app:dag-g}. The fifth equality comes from Assumption 3, as all confounders are controlled for. Notably, both $M$ and $S$ are not confounders of $X^{*}$ and $Y$. The variable $U$ is a confounder of $X^{*}$ and $Y$, however, the backdoor path is blocked by $A$, so it doesn't need to be directly controlled for. Overall, this shows that any difference between $\tau_{a}$ and $\tau^{*}_{a}$ is directly related to the misreporting rate.

\paragraph{Step 3: Getting the expression for the final target estimand}
Finally, we can obtain a way to estimate the misreporting rate by rearanging the terms:
\begin{align*}
   \tau_{a} = \tau^{*}_{a} - P_{a}(X^{*}=0| X=1) \delta^{*}_{a} \implies 
   P_{a}(X^{*}=0 | X=1) = \frac{\tau^{*}_{a} - \tau_{a}}{\delta^{*}_{a}}, 
\end{align*}
for $\delta^*_a \not = 0$. Therefore, by comparing the difference in causal effects, we can identify the misreporting rate.
\end{proof}

\subsection{Proof for Theorem~1}

We now build upon the result from Lemma~\ref{lemma:estimator} as we work toward our main theorem. Before presenting the proof of Theorem~\ref{thm:ident}, we first introduce an additional Lemma which shows that $\tau_{a}$, $\tau_{a}'$, and $\delta_{a}'$ are identifiable using $\mathcal{D}$ and $\mathcal{D}^{*}$, along with standard causal estimation assumptions. Then, in Theorem \ref{thm:ident}, we demonstrate that the misreporting rate is identifiable by showing that $\tau_{a}' = \tau_{a}^{*}$ and $\delta_{a}' = \delta_{a}^{*}$. This proof follows from Assumption \ref{assumption:same-cate}, which states that the conditional causal effect of $X^{*}$ on $Y$ will remain invariant across both strategic and non-strategic populations.

\begin{applemma}[Identifiability of $\tau_{a}$, $\tau_{a}'$, and $\delta_{a}'$] \label{lemma:ident}
Let Assumption \ref{assumption:id} hold. Then $\tau_{a}$, $\tau_{a}'$, and $\delta_{a}'$ are identifiable using $\mathcal{D}$ and $\mathcal{D}'$.
\end{applemma}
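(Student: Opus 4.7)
The plan is to verify identifiability term-by-term. The key observation is that each of $\tau_a$, $\tau_a'$, and $\delta_a'$ factors into (i) an inner quantity that is either a plain conditional expectation or a counterfactual contrast, and (ii) a weighting distribution $P_a(C\mid X=x)$ that is directly observable from $\mathcal{D}$. So I only need to handle the inner quantity carefully.

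First, for $\tau_a$, both the integrand $\mathbb{E}_{P_a}[Y\mid X=x,C=c]$ and the weight $P_a(C=c\mid X=1)$ are conditional laws of observed variables under $P_a$. Since $(X,Y,C,A)$ are all observed in $\mathcal{D}$ (all grey nodes in Figure~\ref{fig:dag-a}), these quantities are identifiable from $\mathcal{D}$ by definition of conditional expectation, so $\tau_a$ is identifiable.

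Second, for $\tau_a'$ and $\delta_a'$, I would identify the counterfactual contrast $\mathbb{E}_{P^*}[Y(X^*=1)\mid C=c]-\mathbb{E}_{P^*}[Y(X^*=0)\mid C=c]$ using standard backdoor adjustment against $\mathcal{D}^*$. Concretely, under Assumption~\ref{assumption:id}(1) (no unmeasured confounding given $C$) together with consistency (Assumption~\ref{assumption:id}(3)), I can write, for each $x^*\in\{0,1\}$,
\begin{align*}
\mathbb{E}_{P^*}[Y(X^*=x^*)\mid C=c]
= \mathbb{E}_{P^*}[Y(X^*=x^*)\mid C=c, X^*=x^*]
= \mathbb{E}_{P^*}[Y\mid C=c, X^*=x^*].
\end{align*}
The first equality uses $Y(x^*)\perp X^*\mid C$ and the second uses consistency. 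Since $X^*$ is observed in $\mathcal{D}^*$ (Figure~\ref{fig:dag-d}), the right-hand side is a conditional expectation of observed variables under $P^*$, and overlap (Assumption~\ref{assumption:id}(2)) guarantees that the conditioning event has positive probability for almost every $c$, so the expression is well-defined. Thus the counterfactual contrast is identifiable from $\mathcal{D}^*$.

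Finally, the weighting distributions $P_a(C=c\mid X=1)$ and $P_a(C=c\mid X=0)$ used in $\tau_a'$ and $\delta_a'$ are conditional densities over observed variables in $\mathcal{D}$, and Assumption~\ref{assumption:id}(2) again ensures these are well-defined. Combining the identified integrand with the identified weight yields identifiability of $\tau_a'$ and $\delta_a'$. I expect no serious obstacle here; the main subtlety is just being careful that the no-unmeasured-confounding and overlap conditions in Assumption~\ref{assumption:id} apply simultaneously to $P^*$ (where the counterfactual contrast is being identified) and to $P_a$ (where the weighting distribution is being estimated), which is exactly what the assumption states.
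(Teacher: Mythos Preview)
Your proposal is correct and follows essentially the same approach as the paper: identify $\tau_a$ directly from observed quantities in $\mathcal{D}$, then for $\tau_a'$ and $\delta_a'$ use no-unmeasured-confounding plus consistency to rewrite $\mathbb{E}_{P^*}[Y(x^*)\mid C=c]$ as $\mathbb{E}_{P^*}[Y\mid X^*=x^*,C=c]$, and combine with the observed weighting distributions $P_a(C\mid X=x)$ from $\mathcal{D}$. If anything, you are slightly more explicit than the paper in invoking overlap to ensure the conditional expectations are well-defined.
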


\begin{proof}
First, recall that
\begin{align*}
    \tau_a := \int_C (\mathbb{E}_{P_{a}}[Y | X=1, C=c]  - \mathbb{E}_{P_{a}}[Y | X=0, C=c]) P_{a}(C=c |X=1) dc.
\end{align*}

We know that $\tau_{a}$ is identifiable using only $\mathcal{D}$ as $Y$, $X$, and $C$ are all known in $\mathcal{D}$.

Next, recall that
\begin{align*}
    \tau'_a := \int_C (\mathbb{E}_{P^*}[Y(X^{*}=1) | C=c]  - \mathbb{E}_{P^*}[Y(X^{*}=0) | C=c]) P_{a}(C=c |X=1) dc
\end{align*}
and
\begin{align*}
    \delta'_a := \int_C (\mathbb{E}_{P^{*}}[Y(X^{*}=1) | C=c]  - \mathbb{E}_{P^{*}}[Y(X^{*}=0) | C=c]) P_{a}(C=c |X=0) dc.
\end{align*}

Again, we know that $P_{a}(C=c |X=1)$ and $P_{a}(C=c |X=0)$ are identifiable using only $\mathcal{D}$. Therefore, to show that $\tau'_{a}$ and $\delta'_{a}$ are identifiable, we must show that
\[
\mathbb{E}_{P^{*}}[Y(X^{*}=1) | C=c] - \mathbb{E}_{P^{*}}[Y(X^{*}=0) | C=c]
\]
is identifiable. This follows immediately from Assumptions \ref{assumption:id}:
\begin{align*}
    &\mathbb{E}_{P^{*}}[Y(X^{*}=1) | C=c] - \mathbb{E}_{P^{*}}[Y(X^{*}=0) | C=c] \\
    &=\mathbb{E}_{P^{*}}[Y(X^{*}=1) | X^{*}=1, C=c] - \mathbb{E}_{P^{*}}[Y(X^{*}=0) | X^{*}=0, C=c] \\
    &= \mathbb{E}_{P^{*}}[Y | X^{*}=1, C=c] - \mathbb{E}_{P^{*}}[Y | X^{*}=0, C=c] \\
\end{align*}
Therefore, $\tau_{a}$, $\tau_{a}'$, and $\delta_{a}'$ are identifiable using $\mathcal{D}$ and $\mathcal{D}'$.
\end{proof} 

\begin{apptheorem}[Identifiability; Theorem \ref{thm:ident} in the main text] 
Let Assumptions \ref{assumption:optimal-misreporting}-\ref{assumption:same-cate} hold. Then for $\delta'_{a} \not = 0$, $P_{a}(X^{*}=0 | X=1)$ is identifiable and can be expressed as: 
\begin{align*}
    P_{a}(X^{*}=0 | X=1) = \frac{\tau'_a - \tau_a}{\delta'_a}.
\end{align*} 
\end{apptheorem}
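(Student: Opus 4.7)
The plan is to combine Lemma~\ref{lemma:estimator} with Assumption~\ref{assumption:same-cate} and the structure of the DAG to replace the unobserved quantities $\tau^*_a$ and $\delta^*_a$ with the observed-data quantities $\tau'_a$ and $\delta'_a$. Concretely, Lemma~\ref{lemma:estimator} already gives $P_a(X^*=0 \mid X=1) = (\tau^*_a - \tau_a)/\delta^*_a$, so the theorem reduces to proving the two equalities $\tau^*_a = \tau'_a$ and $\delta^*_a = \delta'_a$. Identifiability of the right-hand side of the resulting expression then follows from the preceding identifiability lemma (Lemma~\ref{lemma:ident}).

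First I would handle $\tau^*_a = \tau'_a$. The two definitions differ only in whether the conditional expectations are taken under $P_a$ or under $P^*$; the outer weighting $P_a(C=c\mid X=1)$ is identical. Assumption~\ref{assumption:same-cate} states that $\mathbb{E}_{P_a}[Y(1)-Y(0)\mid C=c] = \mathbb{E}_{P^*}[Y(1)-Y(0)\mid C=c]$ for every $c$, so plugging this into the integrand gives the equality immediately.

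The more delicate step will be showing $\delta^*_a = \delta'_a$, since these integrals differ both in the CATE (as above) and in the \emph{weighting distribution}: $P_a(C\mid X^*=0)$ versus $P_a(C\mid X=0)$. The plan is to show $P_a(C\mid X=0) = P_a(C\mid X^*=0)$ using Assumption~\ref{assumption:optimal-misreporting} together with the DAG. Assumption~\ref{assumption:optimal-misreporting} gives $X=0 \Rightarrow X^*=0$, so $P_a(X=0\mid C) = P_a(X=0\mid X^*=0, C)\, P_a(X^*=0\mid C)$. In the DAGs of Figure~\ref{fig:dag}, $X$ is d-separated from $C$ by $\{X^*, A\}$, so once we condition on $A=a$ we have $P_a(X=0\mid X^*=0, C) = P_a(X=0\mid X^*=0)$, which is a constant in $c$. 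A short Bayes-rule manipulation then cancels this constant and yields $P_a(C\mid X=0) = P_a(C\mid X^*=0)$. Combining this with Assumption~\ref{assumption:same-cate} gives $\delta^*_a = \delta'_a$.

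Finally, substituting $\tau^*_a = \tau'_a$ and $\delta^*_a = \delta'_a$ into Lemma~\ref{lemma:estimator} (which requires $\delta^*_a \neq 0$, equivalent here to $\delta'_a \neq 0$) gives the claimed formula $P_a(X^*=0\mid X=1) = (\tau'_a - \tau_a)/\delta'_a$. Identifiability is then immediate from Lemma~\ref{lemma:ident}, since $\tau_a$, $\tau'_a$, and $\delta'_a$ are all identifiable from $\mathcal{D}$ and $\mathcal{D}^*$ under Assumption~\ref{assumption:id}. I expect the main obstacle to be the weighting-distribution equality $P_a(C\mid X=0) = P_a(C\mid X^*=0)$: it is easy to overlook that $\delta'_a$ conditions on the \emph{reported} $X=0$ while $\delta^*_a$ conditions on the \emph{true} $X^*=0$, and this step is where both Assumption~\ref{assumption:optimal-misreporting} and the conditional-independence structure of the DAG are genuinely needed.
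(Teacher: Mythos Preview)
Your proposal is correct and follows essentially the same approach as the paper: reduce to Lemma~\ref{lemma:estimator}, use Assumption~\ref{assumption:same-cate} to get $\tau'_a=\tau^*_a$, then establish $P_a(C\mid X=0)=P_a(C\mid X^*=0)$ from Assumption~\ref{assumption:optimal-misreporting} together with $C\perp X\mid X^*,A$ to conclude $\delta'_a=\delta^*_a$, and finish via Lemma~\ref{lemma:ident}. The only cosmetic difference is that the paper derives the weighting-distribution equality via the law of total probability on $P_a(C\mid X=0)$ rather than your Bayes-rule/cancellation route, but the ingredients and logic are identical.
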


\begin{proof}
    We know that $\tau_a$, $\tau'_a$, and $\delta'_a$ are identifiable using $\mathcal{D}$ and $\mathcal{D}^{*}$ by Lemma \ref{lemma:ident}. Therefore, to complete this proof, we only need to show that $\tau'_a = \tau^{*}_a$ and $\delta'_a = \delta^{*}_a$, as implied by Lemma \ref{lemma:estimator}.

    First, we show that $\tau'_a = \tau^{*}_a$. 
    Recall that
    \begin{align*}
        \tau'_a := \int_C (\mathbb{E}_{P^*}[Y(X^*=1) | C=c]  - \mathbb{E}_{P^*}[Y(X^*=0) | C=c]) P_{a}(C=c |X=1) dc
    \end{align*}
    and
\begin{align*}
    \tau^*_{a} := \int_C (\mathbb{E}_{P_{a}}[Y(X^*=1) | C=c]  - \mathbb{E}_{P_{a}}[Y(X^*=0) | C=c]) P_{a}(C=c |X=1) dc. 
\end{align*}
    Since
    \[
    \mathbb{E}_{P_{a}}[Y(1) - Y(0) | C=c]  = \mathbb{E}_{P^{*}}[Y(1) - Y(0) | C=c]
    \]
    for all $c$ by Assumption \ref{assumption:same-cate}, it follows immediately that $\tau'_a = \tau^{*}_a$.
    
    Next, recall that 
\begin{align*}
    \delta'_a := \int_C (\mathbb{E}_{P^{*}}[Y(X^{*}=1) | C=c]  - \mathbb{E}_{P^{*}}[Y(X^{*}=0) | C=c]) P_{a}(C=c |X=0) dc 
\end{align*}
and
\begin{align*}
    \delta^{*}_a := \int_C (\mathbb{E}_{P_{a}}[Y(X^{*}=1) | C=c]  - \mathbb{E}_{P_{a}}[Y(X^{*}=0) | C=c]) P_{a}(C=c |X^{*}=0) dc.
\end{align*}

     We already know that the conditional causal effects of $X^{*}$ on $Y$ are the same across $P^{*}$ and $P_{a}$. It remains to show that $P_a(C=c | X^*=0) = P_a(C=c | X=0)$ for all values of $c$ to show that $\delta'_a = \delta^{*}_a$.
     We establish this equality next. 

    To show this, we simply apply the law of total probability as follows:
    \begin{align*}
        P_{a}(C=c| X=0) &= P_{a}(C=c|X=0, X^{*}=1)P(X^{*}=1|X=0) \\
        &+ P_{a}(C=c|X=0, X^{*}=0)P(X^{*}=0|X=0) \\
        &= P_{a}(C=c|X=0, X^{*}=0) \\
        &= P_{a}(C=c| X^{*}=0).
    \end{align*}
    The second equality follows because $P_{a}(X^{*}=1 | X=0) = 0$ and $P(X^{*}=0|X=0)=1$ by Assumption \ref{assumption:optimal-misreporting}. The third equality follows as $C \perp X | A, X^{*}$ for all DAGs in Figures \ref{app:dag-a}-\ref{app:dag-g}. Note that this finding is intuitive: it can be traced back to the assumption that the agents pick who to misreport at random, which is implied by the DAGs. 
    
    Thus, the MR is identifiable.

\end{proof}

\subsection{Proof for Theorem~2}
\begin{apptheorem}[Variance; Theorem~\ref{thm:variance} in the main text]
    Let $\hat{\tau}_{a}$, $\hat{\tau}'_{a}$, and $\hat{\delta}'_{a}$ be asymptotically normal estimators with an asymptotic variance of $\sigma_{\tau_{a}}^{2}$, $\sigma_{\tau'_{a}}^{2}$, and $\sigma_{\delta'_{a}}^{2}$. Also let $\sigma_{\tau_{a} \tau'_{a}}$, $\sigma_{\tau_{a} \delta'_{a}}$, and $\sigma_{\delta'_{a} \tau'_{a}}$ denote the covariance between the estimators and $\xrightarrow[]{d}$ denote convergence in distribution. Suppose that $N=M=n$, then for $\delta'_{a} \neq 0$ and $\hat{\delta}'_{a} \neq 0$,
    \[
    \sqrt{n} [\frac{\hat{\tau}'_{a} - \hat{\tau}_{a}}{\hat{\delta}'_{a}} - \frac{\tau'_{a} - \tau_{a}}{\delta'_{a}}] \xrightarrow[]{d} \mathcal{N}(0, \frac{\sigma^{2}_{{\tau'_{a}}}  + \sigma^{2}_{\tau_{a}}  - 2 \sigma_{\tau'_{a}\tau_{a}}}{{\delta'_{a}}^{2}} + 2 \frac{\tau_{a} - \tau'_{a}}{{\delta'_{a}}^{3}} (\sigma_{\tau'_{a}\delta'_{a}} - \sigma_{\tau_{a} \delta'_{a}}) +  \frac{(\tau_{a} - \tau'_{a})^{2}}{{\delta'_{a}}^{4}} \sigma^{2}_{\delta'_{a}})
    \]
\end{apptheorem}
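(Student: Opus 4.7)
}
The plan is to treat the MR estimator as a smooth function of the vector $(\hat{\tau}'_a, \hat{\tau}_a, \hat{\delta}'_a)$ and apply the multivariate delta method. Specifically, I would define the map
\begin{equation*}
    g : \mathbb{R}^3 \to \mathbb{R}, \qquad g(u, v, w) = \frac{u - v}{w},
\end{equation*}
so that the target estimator and estimand are $g(\hat{\tau}'_a, \hat{\tau}_a, \hat{\delta}'_a)$ and $g(\tau'_a, \tau_a, \delta'_a)$ respectively. Since $\delta'_a \neq 0$, $g$ is continuously differentiable in a neighborhood of $(\tau'_a, \tau_a, \delta'_a)$, with gradient
\begin{equation*}
    \nabla g(\tau'_a, \tau_a, \delta'_a) = \Bigl(\tfrac{1}{\delta'_a},\ -\tfrac{1}{\delta'_a},\ -\tfrac{\tau'_a - \tau_a}{(\delta'_a)^2}\Bigr)^{\!\top}.
\end{equation*}

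First, I would lift the assumed marginal asymptotic normality of $\hat{\tau}'_a, \hat{\tau}_a, \hat{\delta}'_a$ (with the stated variances and pairwise covariances) to joint asymptotic normality of the vector $(\hat{\tau}'_a, \hat{\tau}_a, \hat{\delta}'_a)$ on the common sample of size $n = N = M$. Concretely, I would invoke a Cram\'er--Wold style argument: since any linear combination $\alpha_1 \hat{\tau}'_a + \alpha_2 \hat{\tau}_a + \alpha_3 \hat{\delta}'_a$ is itself an asymptotically normal estimator with the variance given by the corresponding quadratic form in the covariance matrix
\begin{equation*}
    \Sigma = \begin{pmatrix} \sigma^2_{\tau'_a} & \sigma_{\tau'_a \tau_a} & \sigma_{\tau'_a \delta'_a} \\ \sigma_{\tau'_a \tau_a} & \sigma^2_{\tau_a} & \sigma_{\tau_a \delta'_a} \\ \sigma_{\tau'_a \delta'_a} & \sigma_{\tau_a \delta'_a} & \sigma^2_{\delta'_a} \end{pmatrix},
\end{equation*}
the vector converges jointly to $\mathcal{N}(0, \Sigma)$ after centering and scaling by $\sqrt{n}$.

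Second, I would apply the multivariate delta method to conclude
\begin{equation*}
    \sqrt{n}\bigl[g(\hat{\tau}'_a, \hat{\tau}_a, \hat{\delta}'_a) - g(\tau'_a, \tau_a, \delta'_a)\bigr] \xrightarrow{d} \mathcal{N}\bigl(0, \nabla g^{\top} \Sigma\, \nabla g\bigr).
\end{equation*}
The remaining step is an expansion of the quadratic form $\nabla g^{\top} \Sigma\, \nabla g$. Collecting the three diagonal contributions and the three off-diagonal cross terms (using $-(\tau'_a - \tau_a) = \tau_a - \tau'_a$ to match the sign convention in the statement) yields exactly
\begin{equation*}
    \frac{\sigma^{2}_{\tau'_{a}} + \sigma^{2}_{\tau_{a}} - 2\sigma_{\tau'_{a}\tau_{a}}}{(\delta'_{a})^{2}} + 2\,\frac{\tau_{a} - \tau'_{a}}{(\delta'_{a})^{3}}\bigl(\sigma_{\tau'_{a}\delta'_{a}} - \sigma_{\tau_{a}\delta'_{a}}\bigr) + \frac{(\tau_{a} - \tau'_{a})^{2}}{(\delta'_{a})^{4}}\sigma^{2}_{\delta'_{a}},
\end{equation*}
which is the asymptotic variance in the theorem.

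The main obstacle is the joint-normality step: the three estimators are computed from overlapping samples (they share $\mathcal{D}$ and, for $\hat{\tau}'_a, \hat{\delta}'_a$, also $\mathcal{D}^*$), so marginal asymptotic normality is not automatically enough. I would either (i) take joint asymptotic normality as an implicit strengthening of the stated hypotheses, which is the convention when covariances are written down as in the theorem, or (ii) argue it explicitly for the particular S-learner/plug-in construction of Eqs.~(\ref{eq:theta_est})--(\ref{eq:delta}) via a stacked estimating equation / Z-estimator argument, noting that $\hat{\tau}'_a, \hat{\tau}_a, \hat{\delta}'_a$ are smooth functionals of a common empirical process so the functional delta method delivers joint normality with the covariance matrix $\Sigma$ appearing above. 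Once joint normality is in hand, the delta-method calculation is mechanical.
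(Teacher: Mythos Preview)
Your proposal is correct and follows essentially the same route as the paper: define $g(u,v,w)=(u-v)/w$, compute its gradient at $(\tau'_a,\tau_a,\delta'_a)$, apply the multivariate delta method, and expand $\nabla g^{\top}\Sigma\,\nabla g$. If anything, your treatment is slightly more careful than the paper's, which simply invokes the delta method without discussing the passage from marginal to joint asymptotic normality that you flag as the main obstacle.
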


\begin{proof}

By the definition of asymptotic normality, each estimator has the following asymptotic distributions, where $\sigma^{2}_{{\tau'_{a}}}$ is asymptotic variance of $\hat{\tau}'_{a}$, $\sigma^{2}_{{\tau_{a}}}$ is asymptotic variance of $\hat{\tau}_{a}$, and $\sigma^{2}_{{\delta'_{a}}}$ is asymptotic variance of $\hat{\delta}'_{a}$:
\[
\sqrt{n}[\hat{\tau}'_{a} - \tau'_{a}] \xrightarrow[]{d} \mathcal{N}(0, \sigma^{2}_{{\tau'_{a}}}),
\]
\[
\sqrt{n}[\hat{\tau}_{a} - \tau_{a}] \xrightarrow[]{d} \mathcal{N}(0, \sigma^{2}_{{\tau_{a}}}), \text{ and}
\]
\[
\sqrt{n}[\hat{\delta}'_{a} - \delta'_{a}] \xrightarrow[]{d} \mathcal{N}(0, \sigma^{2}_{{\delta'_{a}}}).
\]

To proceed, we define the function $g(\hat{\tau}'_{a}, \hat{\tau}_{a}, \hat{\delta}'_{a})$ as an estimator for the misreporting rate:
\[
g(\hat{\tau}'_{a}, \hat{\tau}_{a}, \hat{\delta}'_{a}) = \frac{\hat{\tau}'_{a} - \hat{\tau}_{a}}{\hat{\delta}'_{a}}.
\]

Since $\hat{\tau}'_{a}$, $\hat{\tau}_{a}$, $\hat{\delta}'_{a}$ are asymptotically normal, we can apply the delta method \cite{wasserman2013all} to find the asymptotic variance of $g(\hat{\tau}'_{a}, \hat{\tau}_{a}, \hat{\delta}'_{a})$, which states that

\[
\sqrt{n}[g(\hat{\tau}'_{a}, \hat{\tau}_{a}, \hat{\delta}'_{a}) - g(\tau'_{a}, \tau_{a}, \delta'_{a})] \xrightarrow[]{d} \mathcal{N}(0, \nabla g(\tau'_{a}, \tau_{a}, \delta'_{a})  \Sigma \nabla g(\tau'_{a}, \tau_{a}, \delta'_{a})^{\top}),
\]

where
\[
\nabla g(\tau'_{a}, \tau_{a}, \delta'_{a}) = \begin{pmatrix}\frac{1}{\delta'_{a}} & \frac{-1}{\delta'_{a}} & \frac{\tau_{a} - \tau'_{a}}{{\delta'_{a}}^{2}}\end{pmatrix}
\]
and
\[
\Sigma =
\begin{pmatrix}
\sigma^{2}_{{\tau'_{a}}} &  \sigma_{\tau_{a} \tau'_{a}}  & \sigma_{\delta'_{a} \tau'_{a}}  \\
\sigma_{\tau'_{a} \tau_{a}} & \sigma^{2}_{\tau_{a}} &  \sigma_{\delta'_{a} \tau_{a}} \\
\sigma_{\tau'_{a} \delta'_{a}} & \sigma_{\tau_{a} \delta'_{a}} & \sigma^{2}_{\delta'_{a}}
\end{pmatrix}
\]

Therefore, we can calculate the asymptotic variance as follows:

\begin{align*}
\nabla g(\tau'_{a}, \tau_{a}, \delta'_{a})^{\top}  \Sigma \nabla g(\tau'_{a}, \tau_{a}, \delta'_{a}) &=
\begin{pmatrix}\frac{1}{\delta'_{a}} & \frac{-1}{\delta'_{a}} & \frac{\tau_{a} - \tau'_{a}}{{\delta'_{a}}^{2}}\end{pmatrix}
\begin{pmatrix}
\sigma^{2}_{{\tau'_{a}}} &  \sigma_{\tau_{a} \tau'_{a}}  & \sigma_{\delta'_{a} \tau'_{a}}  \\
\sigma_{\tau'_{a} \tau_{a}} & \sigma^{2}_{\tau_{a}} &  \sigma_{\delta'_{a} \tau_{a}} \\
\sigma_{\tau'_{a} \delta'_{a}} & \sigma_{\tau_{a} \delta'_{a}} & \sigma^{2}_{\delta'_{a}}
\end{pmatrix}
\begin{pmatrix}\frac{1}{\delta'_{a}} \\ \frac{-1}{\delta'_{a}} \\ \frac{\tau_{a} - \tau'_{a}}{{\delta'_{a}}^{2}}\end{pmatrix} \\
&= \begin{pmatrix} \sigma^{2}_{{\tau'_{a}}} \frac{1}{\delta'_{a}} - \sigma_{\tau'_{a} \tau_{a}} \frac{1}{\delta'_{a}} + \sigma_{\tau'_{a} \delta'_{a}}(\frac{\tau_{a} - \tau'_{a}}{{\delta'_{a}}^{2}}) \\ \sigma_{\tau_{a} \tau'_{a}} \frac{1}{\delta'_{a}} - \sigma^{2}_{\tau_{a}} \frac{1}{\delta'_{a}} + \sigma_{\tau_{a} \delta'_{a}} (\frac{\tau_{a} - \tau'_{a}}{{\delta'_{a}}^{2}}) \\ \sigma_{\delta'_{a} \tau'_{a}} \frac{1}{\delta'_{a}} - \sigma_{\delta'_{a} \tau_{a}} \frac{1}{\delta'_{a}} + \sigma^{2}_{\delta'_{a}} (\frac{\tau_{a} - \tau'_{a}}{{\delta'_{a}}^{2}}) \end{pmatrix} \begin{pmatrix}\frac{1}{\delta'_{a}} \\ \frac{-1}{\delta'_{a}} \\ \frac{\tau_{a} - \tau'_{a}}{{\delta'_{a}}^{2}}\end{pmatrix} \\
&= \sigma^{2}_{{\tau'_{a}}} \frac{1}{{\delta'_{a}}^{2}} - \sigma_{\tau'_{a} \tau_{a}} \frac{1}{{\delta'}^{2}_{a}} + \sigma_{\tau'_{a} \delta'_{a}}\frac{\tau_{a} - \tau'_{a}}{{\delta'_{a}}^{3}} \\
&- \sigma_{\tau_{a} \tau'_{a}} \frac{1}{{\delta'}^{2}_{a}} + \sigma^{2}_{\tau_{a}} \frac{1}{{\delta'}^{2}_{a}} - \sigma_{\tau_{a} \delta'_{a}} \frac{\tau_{a} - \tau'_{a}}{{\delta'_{a}}^{3}} \\
&+ \sigma_{\delta'_{a}, \tau'_{a}} \frac{\tau_{a} - \tau'_{a}}{{\delta'}^{3}_{a}} - \sigma_{\delta'_{a} \tau_{a}} \frac{\tau_{a} - \tau'_{a}}{{\delta'}^{3}_{a}} + \sigma^{2}_{\delta'_{a}} \frac{(\tau_{a} - \tau'_{a})^{2}}{{\delta'_{a}}^{4}} \\
&= \sigma^{2}_{{\tau'_{a}}} \frac{1}{{\delta'_{a}}^{2}} + \sigma^{2}_{\tau_{a}} \frac{1}{{\delta'}^{2}_{a}} - 2 \sigma_{\tau'_{a} \tau_{a}} \frac{1}{{\delta'}^{2}_{a}} \\
&+ 2 \sigma_{\tau'_{a} \delta'_{a}}\frac{\tau_{a} - \tau'_{a}}{{\delta'_{a}}^{3}} - 2 \sigma_{\tau_{a} \delta'_{a}} \frac{\tau_{a} - \tau'_{a}}{{\delta'_{a}}^{3}} \\
&+ \sigma^{2}_{\delta'_{a}} \frac{(\tau_{a} - \tau'_{a})^{2}}{{\delta'_{a}}^{4}} \\
&= \frac{1}{{\delta'_{a}}^{2}}( \sigma^{2}_{{\tau'_{a}}}  + \sigma^{2}_{\tau_{a}} - 2 \sigma_{\tau'_{a} \tau_{a}}) \\
&+ 2 \frac{\tau_{a} - \tau'_{a}}{{\delta'_{a}}^{3}} (\sigma_{\tau'_{a} \delta'_{a}} - \sigma_{\tau_{a} \delta'_{a}}) \\
&+ \frac{(\tau_{a} - \tau'_{a})^{2}}{{\delta'_{a}}^{4}} \sigma^{2}_{\delta'_{a}}.
\end{align*}

Therefore, $\sqrt{n} [\frac{\hat{\tau'_{a}} - \hat{\tau_{a}}}{\hat{\delta}'_{a}} - \frac{\tau'_{a} - \tau_{a}}{\delta'_{a}}]$ asymptotically converges to the following normal distribution:
    \[
    \sqrt{n} [\frac{\hat{\tau}'_{a} - \hat{\tau}_{a}}{\hat{\delta}'_{a}} - \frac{\tau'_{a} - \tau_{a}}{\delta'_{a}}] \xrightarrow[]{d} \mathcal{N}(0, \frac{\sigma^{2}_{{\tau'_{a}}}  + \sigma^{2}_{\tau_{a}}  - 2 \sigma_{\tau'_{a}\tau_{a}}}{{\delta'_{a}}^{2}} + 2 \frac{\tau_{a} - \tau'_{a}}{{\delta'_{a}}^{3}} (\sigma_{\tau'_{a}\delta'_{a}} - \sigma_{\tau_{a} \delta'_{a}}) +  \frac{(\tau_{a} - \tau'_{a})^{2}}{{\delta'_{a}}^{4}} \sigma^{2}_{\delta'_{a}})
    \]
\end{proof}

\section{Additional Estimands} \label{appendix:additional-estimands}

In this section, we show that if we can identify the main estimand of interest, $P_{a}(X=1 | X^{*}=0)$, we can also identify other useful estimands, which are defined below.
\begin{definition}[Difference in Marginals] \label{def:misreporting-rate-dim} 
    $\text{DIM} = P_{a}(X = 1) - P_{a}(X^{*}=1)$.
\end{definition}

\begin{definition}[False Positive Rate] \label{def:false-positive-rate}
    $\text{FPR} = P_{a}(X=1 | X^{*}=0)$.
\end{definition}

The estimand in definition \ref{def:false-positive-rate} can simply be interepreted as the false positive rate whereas the estimand in definition $\ref{def:misreporting-rate-dim}$ can be thought of as the probability that a feature was misreported.

To establish that the estimand in definition~\ref{def:misreporting-rate-dim} is identifiable, we first establish that our estimand of interest, $P_{a}(X=1) - P_{a}(X^{*}=0)$, can be expressed as the joint distribution $P_{a}(X=1, X^{*}=0)$ in Lemma~\ref{lemma:difference_is_joint}. Identifiability follows from Theorem~1 and a simple application of Bayes rule as both $P_{a}(X^{*}=0 | X=1)$ and $P_{a}(X=1)$ are identifiable.

Additionally, since Lemma~\ref{lemma:difference_is_joint} implies that both $P_{a}(X=1, X^{*}=0)$ and $P_{a}(X^{*}=0)$ are identifiable, we can show that the estimand in definition \ref{def:false-positive-rate} is also identifiable.

\begin{applemma} \label{lemma:difference_is_joint}
    Let Assumption \ref{assumption:optimal-misreporting} hold. Then $P_{a}(X=1) - P_{a}(X^{*}=1) = P_{a}(X=1, X^{*}=0)$
\end{applemma}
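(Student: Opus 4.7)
The proof is a short application of the law of total probability, with Assumption~\ref{assumption:optimal-misreporting} doing all the real work. The key observation is that Assumption~\ref{assumption:optimal-misreporting} (Optimal Misreporting) asserts that whenever $X^{*}=1$ the reported value satisfies $X=1$ as well, so the joint event $\{X=0, X^{*}=1\}$ has probability zero under $P_{a}$.

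\textbf{Step 1.} I would first decompose $P_{a}(X=1)$ by conditioning on the value of $X^{*}$:
\begin{equation*}
P_{a}(X=1) = P_{a}(X=1, X^{*}=1) + P_{a}(X=1, X^{*}=0).
\end{equation*}
Then I would do the same for $P_{a}(X^{*}=1)$, splitting on $X$:
\begin{equation*}
P_{a}(X^{*}=1) = P_{a}(X=1, X^{*}=1) + P_{a}(X=0, X^{*}=1).
\end{equation*}

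\textbf{Step 2.} I would then invoke Assumption~\ref{assumption:optimal-misreporting}: since $x_{i}(x_{i}^{*}=1)=1$ for every agent, there is no data point with $X^{*}=1$ and $X=0$, hence $P_{a}(X=0, X^{*}=1)=0$. Substituting this into the decomposition of $P_{a}(X^{*}=1)$ yields $P_{a}(X^{*}=1) = P_{a}(X=1, X^{*}=1)$. Subtracting this from the decomposition of $P_{a}(X=1)$ gives the desired identity
\begin{equation*}
P_{a}(X=1) - P_{a}(X^{*}=1) = P_{a}(X=1, X^{*}=0).
\end{equation*}

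\textbf{Main obstacle.} There is essentially no obstacle here; the result is a one-line consequence of Assumption~\ref{assumption:optimal-misreporting} once both marginals are expanded via the law of total probability. The only thing worth being careful about is making the connection between the potential-outcome statement $x_{i}(x_{i}^{*}=1)=1$ in Assumption~\ref{assumption:optimal-misreporting} and the corresponding observational statement $P_{a}(X=0, X^{*}=1)=0$ explicit, which follows from consistency (Assumption~\ref{assumption:id}(3)).
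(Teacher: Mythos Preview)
Your proposal is correct and follows essentially the same idea as the paper's proof: both use the law of total probability together with Assumption~\ref{assumption:optimal-misreporting} to eliminate the $\{X=0,X^{*}=1\}$ cell. The paper phrases it as $P_{a}(X=1\mid X^{*}=1)=1$ and manipulates starting from $P_{a}(X=1,X^{*}=0)$, while you decompose both marginals and subtract; these are cosmetically different but mathematically identical.
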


\begin{proof}
    \begin{align*}
        P_{a}(X=1, X^{*}=0) &= P_{a}(X=1, X^{*}=0) + P_{a}(X^{*}=1) - P_{a}(X^{*}=1) \\
        &= P_{a}(X=1, X^{*}=0) + P_{a}(X=1 | X^{*}=1) P_{a}(X^{*}=1) - P_{a}(X^{*}=1) \\ 
        &= P_{a}(X=1, X^{*}=0) + P_{a}(X=1, X^{*}=1) - P_{a}(X^{*}=1) \\ 
        &= P_{a}(X=1) - P_{a}(X^{*}=1), 
    \end{align*}
    where the second equality follows because $P_{a}(X=1 | X^{*}=1) =1$ by Assumption \ref{assumption:optimal-misreporting}.
\end{proof}

\begin{appcorollary}[Identifiability of difference in marginals] 
Let Assumptions \ref{assumption:optimal-misreporting}-\ref{assumption:same-cate} hold. Then for $\delta'_{a} \not = 0$, $P_{a}(X = 1) - P_{a}(X^{*}=1)$ is identifiable and can be expressed as: 
\begin{align*}
    P_{a}(X = 1) - P_{a}(X^{*}=1) = \frac{\tau'_a - \tau_a}{\delta'_a} \times P_{a}(X=1).
\end{align*} 
\end{appcorollary}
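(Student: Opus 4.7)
The plan is to derive the corollary by chaining together three ingredients that are already in hand: the preceding Lemma that rewrites the marginal difference as a joint probability, Theorem~\ref{thm:ident} which identifies the conditional misreporting rate, and the trivial observation that $P_a(X=1)$ is identifiable directly from $\mathcal{D}$. The overall strategy is to factorize $P_a(X=1) - P_a(X^*=1)$ into a product whose factors are all known to be identifiable, then read off the closed-form expression.

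First I would invoke Lemma~\ref{lemma:difference_is_joint} to replace the difference of marginals by the joint probability, so that $P_a(X=1) - P_a(X^*=1) = P_a(X=1, X^*=0)$; this step uses only Assumption~\ref{assumption:optimal-misreporting}, which is already assumed in the corollary's hypotheses. Next I would apply the chain rule of probability to write the joint as $P_a(X=1, X^*=0) = P_a(X^*=0 \mid X=1)\, P_a(X=1)$. At this point the first factor is exactly the misreporting rate from Definition~\ref{def:misreporting-rate}, which Theorem~\ref{thm:ident} identifies as $(\tau'_a - \tau_a)/\delta'_a$ under Assumptions~\ref{assumption:optimal-misreporting}--\ref{assumption:same-cate} whenever $\delta'_a \neq 0$. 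Substituting gives the stated closed form.

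The final step is to note that $P_a(X=1)$ is a marginal of an observed variable conditional on the observed agent identity $A=a$, so it is identifiable directly from $\mathcal{D}$ without any additional assumptions. Combining, the product of two identifiable quantities is identifiable, which yields both the identifiability claim and the explicit formula.

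There is no real obstacle here: once Lemma~\ref{lemma:difference_is_joint} and Theorem~\ref{thm:ident} are in place, the corollary is essentially a two-line computation via Bayes' rule. The only thing to be careful about is confirming that the assumptions of Theorem~\ref{thm:ident} (in particular $\delta'_a \neq 0$) are inherited by the corollary's statement, which they are, and verifying that no additional overlap or positivity conditions are needed for the marginal $P_a(X=1)$ beyond what Assumption~\ref{assumption:id} already guarantees.
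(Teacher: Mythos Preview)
Your proposal is correct and follows essentially the same approach as the paper: invoke Lemma~\ref{lemma:difference_is_joint} to rewrite the marginal difference as the joint $P_a(X=1, X^*=0)$, factor via the chain rule into $P_a(X^*=0\mid X=1)\,P_a(X=1)$, apply Theorem~\ref{thm:ident} to identify the first factor, and note that the second is directly observable. The paper's proof is precisely this two-line computation.
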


\begin{proof}
    The proof relys on a simple application of Bayes rule, and results from Lemma~\ref{lemma:difference_is_joint} and Theorem~1. Specifically, we have that: 
    \begin{align*}
        P_{a}(X=1) - P_{a}(X^{*}=1) & = P_{a}(X=1, X^{*}=0) \\
        & = P_a(X^*=0 | X=1) P_a(X=1), 
    \end{align*}
where the first equality follows by Lemma~\ref{lemma:difference_is_joint} and the second equality follows by Bayes rule. By theorem~1, the first term ($P_a(X^*=0 | X=1)$) is identifiable, and $P_a(X=1)$ is identifiable because all variables required for estimation are observed. 
\end{proof}

\begin{appcorollary}[Identifiability of false positive rate] 
Let Assumptions \ref{assumption:optimal-misreporting}-\ref{assumption:same-cate}  hold. Then for $\delta'_{a} \not = 0$, $P_{a}(X=1 | X^{*}=0)$ is identifiable and can be expressed as: 
\begin{align*}
    P_{a}(X=1 | X^{*}=0) = \frac{\tau'_a - \tau_a}{\delta'_a} \times P_{a}(X=1).
\end{align*} 
\end{appcorollary}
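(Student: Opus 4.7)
The plan is to reduce the false positive rate to quantities that are either observable in $\mathcal{D}$ or already identified by previous results. The natural starting point is Bayes' rule:
$$P_a(X=1 \mid X^*=0) = \frac{P_a(X^*=0 \mid X=1)\, P_a(X=1)}{P_a(X^*=0)}.$$
This converts the problem of identifying a conditional involving the unobserved $X^*$ on the right of the conditioning bar into the problem of identifying three simpler pieces, each of which I can handle with tools already developed.

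First, I would identify the two easier factors. The conditional $P_a(X^*=0 \mid X=1)$ is exactly the misreporting rate, and Theorem~\ref{thm:ident} gives $P_a(X^*=0 \mid X=1) = (\tau'_a - \tau_a)/\delta'_a$, which is valid under the same condition $\delta'_a \neq 0$ appearing in the corollary's hypotheses. The marginal $P_a(X=1)$ is a probability over the observed reported feature in $\mathcal{D}$ and is therefore directly estimable.

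Next, I would identify the denominator $P_a(X^*=0)$. This is where Assumption~\ref{assumption:optimal-misreporting} does the real work, via Lemma~\ref{lemma:difference_is_joint}. That lemma tells us $P_a(X=1) - P_a(X^*=1) = P_a(X=1, X^*=0)$, and the right-hand side equals $P_a(X^*=0 \mid X=1)\, P_a(X=1)$ by definition of conditional probability. Rearranging gives
$$P_a(X^*=0) = 1 - P_a(X^*=1) = 1 - P_a(X=1)\bigl(1 - P_a(X^*=0 \mid X=1)\bigr),$$
a function purely of $P_a(X=1)$ and the misreporting rate, both already identified. Substituting all three pieces back into the Bayes' rule expression yields an identifying formula for $P_a(X=1 \mid X^*=0)$ in terms of $\tau'_a, \tau_a, \delta'_a$, and $P_a(X=1)$.

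The step I expect to matter most is the use of Lemma~\ref{lemma:difference_is_joint} to recover $P_a(X^*=0)$; this is the place where Assumption~\ref{assumption:optimal-misreporting} (the event $\{X=0, X^*=1\}$ has zero probability) is indispensable, since without it the marginal $P_a(X^*=0)$ would not be reconstructible from observed-feature quantities alone. Everything else is routine bookkeeping: no new causal assumptions beyond those already invoked for Theorem~\ref{thm:ident} are required, and the condition $\delta'_a \neq 0$ is inherited directly from that theorem.
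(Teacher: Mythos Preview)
Your proposal is correct and follows essentially the same route as the paper: apply Bayes' rule to write $P_a(X=1\mid X^*=0)=P_a(X^*=0\mid X=1)\,P_a(X=1)/P_a(X^*=0)$, invoke Theorem~\ref{thm:ident} for the numerator's misreporting-rate factor, and use Lemma~\ref{lemma:difference_is_joint} to recover $P_a(X^*=0)$ from observable quantities (your expression $1-P_a(X=1)\bigl(1-P_a(X^*=0\mid X=1)\bigr)$ is algebraically identical to the paper's $P_a(X=0)+P_a(X=1,X^*=0)$). One remark: neither your argument nor the paper's own proof actually lands on the displayed formula $\frac{\tau'_a-\tau_a}{\delta'_a}\times P_a(X=1)$, which equals the \emph{joint} $P_a(X=1,X^*=0)$ rather than the conditional; this appears to be a typo in the corollary statement carried over from the preceding corollary, and both proofs correctly establish identifiability, which is the substantive claim.
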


\begin{proof}
    From Lemma \ref{lemma:difference_is_joint}, we can derive $P(X^{*}=0)$ as follows:
    \begin{align*}
        P_{a}(X = 1) - P_{a}(X^{*}=1) &= P_{a}(X=1, X^{*}=0) \implies \\
        P_{a}(X = 1) - P_{a}(X=1, X^{*}=0) &= P_{a}(X^{*}=1) \implies \\
        1 - \{P_{a}(X = 1) - P_{a}(X=1, X^{*}=0) \}&= 1 - P_{a}(X^{*}=1) \implies \\
        P_{a}(X = 0) + P_{a}(X=1, X^{*}=0) &= P_{a}(X^{*}=0)
    \end{align*}
    
    By Bayes' theorem, we can write the estimand as
    \begin{align*}
        P_{a}(X=1 | X^{*}=0) &= \frac{P_{a}(X^{*}=0 | X=1) P_{a}(X=1)}{P_{a}(X^{*}=0)} \\
        &= \frac{P_{a}(X^{*}=0 | X=1) P_{a}(X=1)}{P_{a}(X = 0) + P_{a}(X=1, X^{*}=0)}
    \end{align*}

    Thus, since $P_{a}(X^{*}=0 | X=1)$, $P_{a}(X=1, X^{*}=0)$, $P_{a}(X=1)$, and $P_{a}(X=0)$ are identifiable, $P_{a}(X=1 | X^{*}=0)$ must be identifiable.
    
\end{proof}

\section{Datasets} \label{appendix:empirical-info}

\subsection{Medicare Dataset}

The medicare dataset used in our experiments consists of insurance claims data from real U.S. Medicare enrollees enrolled in either Traditional Medicare or a private medicare insurance plan. The data was provided to the authors under a data usage agreement with the Centers for Medicare and Medicaid Services (CMS). For our experiments, we only use enrollees that had Medicare coverage in both 2019 ($t$) and 2018 ($t-1$). We exclude enrollees who were dual-eligible (i.e., are eligible for both U.S. Medicaid and Medicare), had end-stage renal disease, or were below the age of 65 for the year $t-1$. In addition, we exclude all enrollees who resided outside of the 50 U.S. states, the District of Columbia, Puerto Rico, or the U.S. Virgin Islands.

Each of the private medicare insurers is treated as a strategic agent that may misreport enrollee features. We used five agents in total for our experiments. Four agents correspond to the largest private insurers based on the total number of enrollees in year $t$. The fifth agent is created by aggregating the enrollees from all other smaller insurers. In contrast, Traditional Medicare was treated as a trustworthy agent that doesn't manipulate enrollee data, as there is no incentive to misreport.

The goal of our analysis is to assess how much private medicare insurers misreport HCC codes, which are binary variables that indicate if an enrollee has been diagnosed with a specific medical condition. We use V23 HCC codes, as defined by CMS, which are derived by mapping ICD-10 diagnosis codes reported in the claims data. There are two types of HCC codes: payment HCCs, which are used by a risk-adjustment model to predict future healthcare costs, and nonpayment HCCs, which are not used to determine costs. We expect the misreporting rate for each nonpayment HCC to be zero as there is no incentive for private insurers to misreport them.

For our analysis, we partition the enrollees into two different cohorts: stayers and switchers. To derive the stayers cohort, we sampled enrollees enrolled in Traditional Medicare for all 12 months in year $t-1$ and were not enrolled in a private insurance plan in year $t$. For the switchers cohort, we used enrollees that were enrolled in Traditional Medicare for all 12 months in year $t-1$ and were enrolled in a private insurance plan for at least one month in year $t$. We only used a 20\% random sample of the eligible stayers cohort (868255 samples) and 100\% of the eligible switchers cohort (166539 samples). For the outcome $(Y)$, we use the enrollee's death status in year $t$.

For the features $(X)$, we used both payment and nonpayment HCC codes, consisting of 83 and 99 codes, respectively. As covariates, we used the enrollee's age, race, sex, and the payment HCCs from year $t-1$ to ensure they were not misreported. To obtain low variance estimates, we restricted our analysis to payment and nonpayment HCCs with the largest causal effects on death and where at least 1\% of the switchers enrollees in year $t$ had the HCC code.

\subsection{Loan Datasets}

In our loan dataset simulations, we model a setting where loan applicants may either genuinely modify or misreport their employment status to improve their chances of getting approved for a loan. For each of our simulations, we simulate a single strategic agent ($A=1$) and a single nonstrategic agent ($A=0$). In addition to the semi-synthetic dataset used for the experiments in Section 5, we generate additional datasets based on different DAGs in Figure \ref{fig:appendix-dags}. The data generation process for the other datasets is explained in Appendix \ref{appendix:additional-experiments}.

All of the simulations use the covariates extracted from a real credit card dataset \cite{default_of_credit_card_clients_350}. These include three binary variables: marriage status ($C_{M}$), sex ($C_{S}$), and education ($C_E$), as well as another variable representing a person's age ($C_{A}$). We use min-max normalization so that $C_{A}$ is between 0 and 1. The agent variable $(A)$, the variable for employment status $(X^{*})$, and the variable indicating if a loan applicant defaulted $(Y)$, are all generated using the covariates. Misreporting is done in accordance with the following equation:
\[
X_{i} \sim X_{i}^* + A_{i} (1-X_{i}^*) \text{Bernoulli} (\mu)
\]
where $\mu$ is picked to target a desired $\text{MR}$ (default = 0.2). Each experiment is repeated 100 times, with new draws of $A, X, X^*$, and $Y$. Across all experiments, we use an 80/20 train/test split of $\mathcal{D}$.

\section{Estimators} \label{appendix:baselines}

In this section, we present additional details about our primary method (CMRE) as well as the baseline approaches (NMRE, NDEE, and OC-SVM). We also specify the hyperparameters and libraries used to implement each method in our experiments.

\subsection{CMRE}

Recall that for a suitable function class $\mathcal{F}$, a loss function $\ell$, and $N_a$ -- the number of data points in $\mathcal{D}$ for which $A=a$ -- we define

\begin{align}\label{eq:app-theta_est}
    f_a(c, x) = \arg \min_{f \in \mathcal{F}} \frac{1}{N_a} \sum_{i : i \in \mathcal{D}, a_i =a} \ell(f(c_i, x_i), y_i), \quad \text{and} \quad \theta_a(c) := f_a(c, 1) - f_a(c, 0)
\end{align}
and
\begin{align}\label{eq:app-theta_star_est}
    f^*(c, x^{*}) = \arg \min_{f \in \mathcal{F}} \frac{1}{M} \sum_{i : i \in \mathcal{D}^{*}} \ell(f(c_i, x^{*}_i), y_i) \quad \text{and} \quad \theta^*(c) := f^*(c, 1) - f^*(c, 0).
\end{align}

Recall that $N_{ax}$ denotes the number of data points in $\mathcal{D}$ for which $A=a$ and $X=x$. Using this, we compute the estimates for $\tau'_a, \tau_a$ and $\delta_{a}'$ as follows:

\begin{align} \label{eq:app-delta} 
    \hat{\tau}'_a = \frac{1}{N_{a1}} \sum_{\substack{i : i \in \mathcal{D}, x_i =1, \\ a_i = a}} \theta^{*}(c_{i})  , \quad 
    \hat{\tau}_{a} = \frac{1}{N_{a1}} \sum_{\substack{i : i \in \mathcal{D}, x_i =1, \\ a_i = a}}  \theta(c_{i}) , \quad \hat{\delta}'_{a} = \frac{1}{N_{a0}} \sum_{\substack{i : i \in \mathcal{D}, x_i =0, \\ a_i = a}}  \theta^{*}(c_{i}).
\end{align}

To estimate $\theta_{a}(c)$ and $\theta^{*}(c)$, we employ an S-learner, where the models $f_{a}$ and $f^{*}$ are implemented using XGBoost.
We use the default hyperparameters provided by the XGBoost library in Python to train each model \cite{Chen:2016:XST:2939672.2939785}, including a learning rate of 0.3, a maximume tree depth of 6, and L2 regularization with a coefficient of 1.

The complete algorithm for CMRE is summarized in \ref{algorithm:cmre}. We note that for our experiments, we split $\mathcal{D}$ such that the data used to train $f_{a}(c,x)$ in equation \ref{eq:app-theta_est} is different than the data used to estimate the MR in equation \ref{eq:app-delta}. Specifically, 80\% of the data in $\mathcal{D}$ is used to train $f_{a}(c,x)$ in equation \ref{eq:app-theta_est} and the other 20\% is used to estimate $\hat{\tau}'_a$, $\hat{\tau}_{a}$, and $\hat{\delta}'_{a}$.

\begin{algorithm}[hbt!]
\caption{CMRE algorithm}\label{algorithm:cmre}
\begin{algorithmic}
\Require $\mathcal{D} = \{(x_{i}, y_{i}, c_{i}, a_i)\}_{i}^{N}$ and  $\mathcal{D}^{*} = \{(x^{*}_i, y_i, c_i)\}_i^{M}$
\Ensure $\widehat{\text{MR}}$, an estimate of the MR for each agent
\For{ each agent $a$}
\State Estimate $\theta_a(c)$ using equation~\ref{eq:app-theta_est}
\State Estimate $\theta^{*}(c)$ using equation~\ref{eq:app-theta_star_est}
\State Estimate $\hat{\tau}_{a}'$, $\hat{\tau}_a$, and $\hat{\delta}'_{a}$ using equation~\ref{eq:app-delta}
\State \Return $\frac{\hat{\tau}_{a}' - \hat{\tau}_a}{\hat{\delta'}_{a}}$
\EndFor
\end{algorithmic}
\end{algorithm}

\subsection{NMRE}

NMRE adopts a similar strategy to CMRE for estimating the misreporting rate. Specifically, it leverages the differences in causal effect estimates. However, the key distinction between NMRE and CMRE is that NMRE doesn't account for potential confounders or treatment effect modifiers between $X^{*}$ and $Y$. As a result, NMRE employs a simple difference-in-means estimator to estimate the average effect of the feature on $Y$ over both $\mathcal{D}^{*}$ and $\mathcal{D}$. Therefore, to estimate the MR for a given agent $a$, we define
\begin{align} \label{eq:dif-true}
    \hat{\tau}' = \frac{1}{M_{1}} \sum_{\substack{i : i \in \mathcal{D}^{*}}, x^{*}_{i}=1} y_{i} -  \frac{1}{M_{0}} \sum_{\substack{i : i \in \mathcal{D}^{*}}, x^{*}_{i}=0} y_{i}
\end{align}
and
\begin{align} \label{eq:dif-nominal}
    \hat{\tau}_a = \frac{1}{N_{a1}} \sum_{\substack{i : i \in \mathcal{D}, x_i =1, \\ a_i = a}} y_{i} - \frac{1}{N_{a0}} \sum_{\substack{i : i \in \mathcal{D}, x_i =0, \\ a_i = a}} y_{i},
\end{align}
where $M_{x}$ is the number of datapoints in $\mathcal{D}^{*}$ where $X^{*}=x$.

The misreporting rate is the estimated as:
\begin{align*}
    \hat{\text{MR}} = \frac{\hat{\tau}' - \hat{\tau}_{a}}{\hat{\tau}'}.
\end{align*}

The complete algorithm for NMRE is summarized in Algorithm \ref{algorithm:nmre}.

\begin{algorithm}[hbt!]
\caption{NMRE algorithm}\label{algorithm:nmre}
\begin{algorithmic}
\Require $\mathcal{D} = \{(x_{i}, y_{i}, c_{i}, a_i)\}_{i}^{N}$ and  $\mathcal{D}^{*} = \{(x^{*}_i, y_i, c_i)\}_i^{M}$
\Ensure $\widehat{\text{MR}}$, an estimate of the MR for each agent
\For{ each agent $a$}
\State Estimate $\hat{\tau}'$ using equation~\ref{eq:dif-true}
\State Estimate $\hat{\tau}_a$ using equation~\ref{eq:dif-nominal}
\State \Return $\frac{\hat{\tau}' - \hat{\tau}_{a}}{\hat{\tau}'}$
\EndFor
\end{algorithmic}
\end{algorithm}

\subsection{NDEE}

The NDEE baseline estimates the misreporting rate by computing a quanitity similar to the natural direct effect of $A$ on $X$, divided by the probability $P_{a}(X=1)$. Specifically, we estimate
\begin{align} \label{eq:ndee}
\frac{1}{P_{a}(X=1)}\int_{C}(\mathbb{E}_{P_{a}}[X| C=c] - \mathbb{E}_{P^{*}}[X| C=c]) P_{a}(C=c) dC.
\end{align}

Assuming that all datapoints in $P^{*}$ are generated a single trustworthy agent $a^{*}$, the integral term,
\[
\int_{C}(\mathbb{E}_{P_{a}}[X| C=c] - \mathbb{E}_{P^{*}}[X|C=c]) P_{a}(C=c) dC,
\]
can be interpreted as the natural direct effect of $A$ of $X$ within the treated population (i.e., data points where $A=a$ are the treated whereas $a^{*}$ refers to the untreated), when $C$ controls for all mediators and confounders between $A$ and $X$. We next show how to estimate the NDEE in practice, which is as follows.

\paragraph{Dataset Preparation.}
We modify the original dataset $\mathcal{D}^{*}$ such that $\mathcal{D}^{*} = \{(x^{*}_{i}, y_{i}, c_{i}, a_{i})\}_{i=1}^{M}$ where $a_{i} = a^{*}$ for all $a_{i}$. Next, we combine both $\mathcal{D}$ and $\mathcal{D}^{*}$ to create a unified dataset:
\[
\mathcal{D}' = \mathcal{D} \cup \mathcal{D}^{*}.
\]

\paragraph{Causal Effect Estimation}

Let $\mathcal{F}$ denote a suitable function class and $\ell$ a loss function. We then learn a function

\begin{align}\label{eq:ndee-model}
    f(c, a) := \arg \min_{f' \in \mathcal{F}} \frac{1}{N + M} \sum_{i : i \in \mathcal{D'}} \ell(f'(c_i, a_i), x_i).
\end{align}

Next, we estimate the natural direct effect of $A$ over the treated population as follows:

\begin{align}\label{eq:nde}
    \hat{\tau}_{\text{NDE}} := \frac{1}{N_{a}} \sum_{\substack{i : i \in \mathcal{D}, a_i = a}}  f(c_i, a) - f(c_i, a^{*}).
\end{align}

\paragraph{Probability Estimation}

The probability $P_{a}(X=1)$ can be estimated simply as 
\begin{align}\label{eq:prob_agent}
    \pi_{a} := \frac{1}{N_{a}} \sum_{\substack{i : i \in \mathcal{D}, a_i = a}} x_{i}
\end{align}

The full algorithm is summarized in \ref{algorithm:ndee}. The model $f(c,a)$ is implemented using XGBoost, where we use the default hyperparameters provided by the XGBoost library in Python \cite{Chen:2016:XST:2939672.2939785} (a learning rate of 0.3, a maximume tree depth of 6, and L2 regularization with a coefficient of 1).

\begin{algorithm}[hbt!]
\caption{NDEE algorithm}\label{algorithm:ndee}
\begin{algorithmic}
\Require $\mathcal{D} = \{(x_{i}, y_{i}, c_{i}, a_i)\}_{i}^{N}$,  $\mathcal{D}^{*} = \{(x^{*}_i, y_i, c_i)\}_i^{M}$, and $\mathcal{D}'= \{(x_{i}, y_{i}, c_{i}, a_i)\}_{i}^{N+M}$
\Ensure $\widehat{\text{MR}}$, an estimate of the MR for each agent
\For{ each agent $a$}
\State Estimate $f(c,a)$ using equation~\ref{eq:ndee-model}
\State Estimate $\hat{\tau}_{\text{NDE}}$ using equation~\ref{eq:nde}
\State Estimate $\pi_{a}$ using equation~\ref{eq:prob_agent}
\State \Return $\frac{1}{\pi_{a}}\hat{\tau}_{\text{NDE}}$
\EndFor
\end{algorithmic}
\end{algorithm}

\paragraph{When can the NDEE accurately estimate the MR?} We show that if $A$ does not directly causally effect $X^{*}$, it is possible to obtain an accurate estimate of the misreporting rate using the NDEE. To show this, we can rewrite the misreporting rate as follows:

\begin{align*}
    \text{MR} &= P_{a}(X^{*}=0|X=1) \\
    &= \frac{P_{a}(X=1) - P_{a}(X^{*}=1)}{P_{a}(X=1)} \\
    &= \frac{1}{P_{a}(X=1)} (\mathbb{E}_{P_{a}}[X] - \mathbb{E}_{P_{a}}[X^{*}]) \\
    &= \frac{1}{P_{a}(X=1)} \int_{C} (\mathbb{E}_{P_{a}}[X | C=c] - \mathbb{E}_{P_{a}}[X^{*} | C=c]) P_{a}(C=c) dC.
\end{align*}

Thus, unless $\mathbb{E}_{P_{a}}[X^{*} | C=c] = \mathbb{E}_{P^{*}}[X^{*} | C=c]$, the NDEE will give a biased estimate of the misreporting rate. This equality will hold if $X^{*} \perp A | C$, which can only be true if $A$ does not directly affect $X^{*}$. Therefore, we should expect the NDEE to only work in settings where agents do not directly genuinely modify their features.

\subsection{OC-SVM}

Under our assumptions, no data points where $X=0$ are misreported. Therefore, we restrict the OC-SVM approach to a subset of the data from $\mathcal{D}^{*}$ and $\mathcal{D}$ where $X=1$, which we denote as $\mathcal{D}_{1}^{*}$ and $\mathcal{D}_{1}$, respectively. To train a One-Class SVM model, we use data from $\mathcal{D}_{1}^{*}$, which is assumed to contain no misreported instances. The One-Class SVM model, denoted as $g(y, c)$, is trained to identify outliers/misreported instances using only the variables $Y$ and $C$. The model outputs a 1 if a datapoint is classified an outlier, and 0 otherwise.

For a given agent $a$, we estimate the misreporting rate using the OC-SVM as:
\[
\hat{\text{MR}} = \frac{1}{N_{a1}}\sum_{\substack{i : i \in \mathcal{D}_{1}, a_i = a}} g(y_i, c_i).
\]

We use the One-Class SVM implementation from the scikit-learn library \cite{scikit-learn}. Given our assumption that all data points in $\mathcal{D}_{1}^{*}$ are correctly reported, we used a small $\nu$ parameter (0.01). Additionally, we use an RBF kernel with a bandwidth parameter $\gamma=0.1$.

\section{Additional Experiments} \label{appendix:additional-experiments}

\subsection{Medicare Experiments}

Figure \ref{fig:full-ma-experiments} presents our Medicare experiments including the results from the OC-SVM estimator. The estimated misreporting rate for the OC-SVM is consistent across all HCC codes and agents, reflecting the results from our semi-synthetic loan dataset experiments. This suggests that the OC-SVM is unable to distinguish misreported data points from normal data points.

Tables \ref{table:nonpayment} and \ref{table:payment} provide information about each of the nonpayment and payment HCCs that had $\hat{\delta'_{a}} > 0$ and were present in at least 1\% of the switcher enrollees. For each HCC code, the tables report the estimated MR using CMRE, the number of stayer and switcher enrollees in year $t$, $\hat{\delta'_{a}}$, and the lower and upper bounds for the 95\% confidence interval. We exclude HCCs that were nonpayment in year $t$ but were used as payment HCCs for the risk adjustment model in year $t+1$, due to the potential incentive for agents to misreport them.

\begin{figure*}[htb!]
    \center
    \scalebox{0.32}{
    \includegraphics{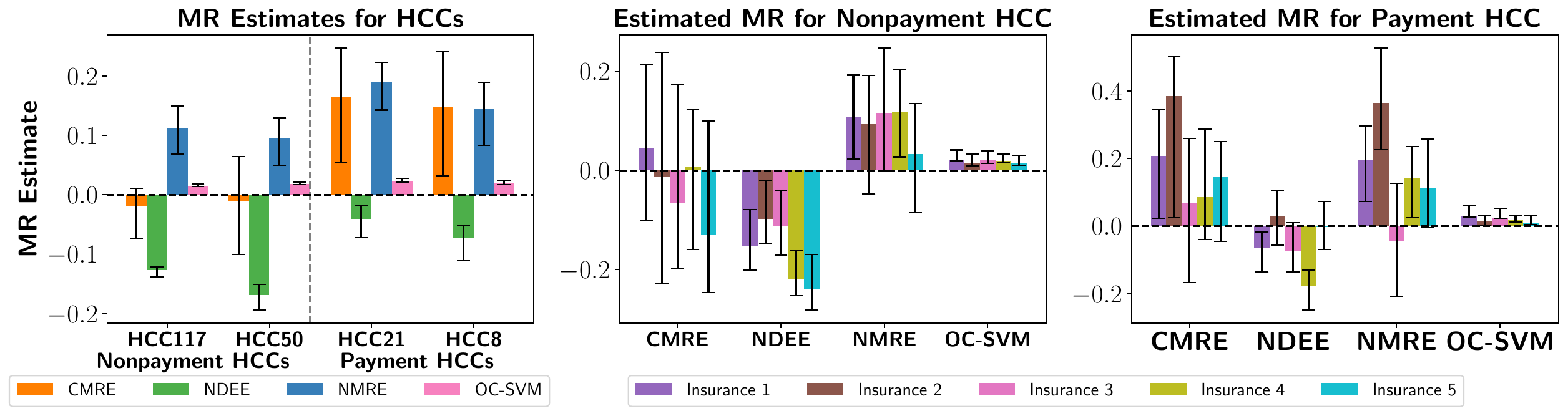}
    }
    \caption{For each plot, the $y$-axis represents the estimated MR for an HCC code and the error bars represent a 95\% confidence interval. \textbf{(Left)} The $x$-axis has two nonpayment HCCs (HCC117 and HCC50) and two payment HCCs (HCC21 and HCC8). Our approach (CMRE) has a MR estimate close to zero for nonpayment HCCs and significantly above zero for the payment HCCs, which aligns with what is expected in current literature. Baselines that fail to distinguish genuine modification from strategic adaptation (NDEE) seem to underestimate the MR and baselines that do not control for confounding (NMRE) seem to overestimate the MR. OC-SVM has a similar estimate for each HCC, making it ineffective at identifying misreported data points. \textbf{(Middle and Right)} The $x$-axis represents the baselines. The middle plot represents estimates for HCC50 and the right plot represents MR estimates for HCC8 across different private insurers (agents). Similar to the left plot, NDEE seems to underestimate the MR across most agents, and NMRE overestimates, and the MR estimates for OC-SVM are consistent across all agents and HCC codes.
    }
    \label{fig:full-ma-experiments}
\end{figure*}

\begin{table}[htb!]
  \caption{Nonpayment HCCs with $\hat{\delta'_{a}} > 0.1$ and present in at least 1\% of switcher enrollees.}
\begin{tabular}{|c|c|c|c|c|c|c|c|}
\hline
    \textbf{HCC} & \textbf{Full Name} & \textbf{MR}  & \textbf{\# Stayers} & \textbf{\# Switchers} & \textbf{$\hat{\delta'_{a}}$} & \textbf{LCB} & \textbf{UCB} \\
\hline
50 & \parbox{3.6cm}{\vspace{2pt}Delirium and \\ Encephalopathy\vspace{2pt}} & -.011 & 32294 & 4535 & .175 & -.101 & .064 \\
\hline
117 & \parbox{3.6cm}{\vspace{2pt}Pleural \\ Effusion/Pneumothorax\vspace{2pt}} & -.018 & 39037 & 5601 & .153 & -.074 & .012 \\
\hline
\end{tabular}
\label{table:nonpayment}
\end{table}

\begin{table}[htb!]
  \caption{Payment HCCs with $\hat{\delta'_{a}} > 0.1$ and present in at least 1\% of switcher enrollees.}
\begin{tabular}{|c|c|c|c|c|c|c|c|}
\hline
\textbf{HCC} & \textbf{Full Name} & \textbf{MR} & \textbf{\# Stayers} & \textbf{\# Switchers} & \textbf{$\hat{\delta}$} & LCB & UCB \\
\hline
8 & \parbox{3.6cm}{\vspace{2pt}Metastatic Cancer and \\ Acute Leukemia\vspace{2pt}} & .148 & 18762 & 2646 & .276 & .032 & .248 \\
\hline
21 & \parbox{3.6cm}{\vspace{2pt}Protein-Calorie \\ Malnutrition\vspace{2pt}} & .165 & 21460 & 3338 & .270 & .054 & .241 \\
\hline
84 & \parbox{3.6cm}{\vspace{2pt}Cardio-Respiratory \\ Failure and Shock\vspace{2pt}} & .044 & 40308 & 6292 & .247 & .014 & .146 \\
\hline
188 & \parbox{3.6cm}{\vspace{2pt}Artificial Openings for \\ Feeding or Elimination\vspace{2pt}} & -.017 & 10528 & 1684 & .194 & -.109 & .169 \\
\hline
2 & \parbox{3.6cm}{\vspace{2pt}Septicemia, Sepsis, SIRS, \\ and Shock\vspace{2pt}} & .147 & 28297 & 4309 & .190 & -.006 & .192 \\
\hline
135 & \parbox{3.6cm}{\vspace{2pt}Acute Renal Failure\vspace{2pt}} & .027 & 49021 & 7868 & .130 & -.031 & .133 \\
\hline
103 & \parbox{3.6cm}{\vspace{2pt}Hemiplegia/Hemiparesis\vspace{2pt}} & .214 & 12589 & 2395 & .129 & .026 & .311 \\
\hline
86 & \parbox{3.6cm}{\vspace{2pt}Acute Myocardial \\ Infarction\vspace{2pt}} & .087 & 20555 & 3230 & .117 & -.039 & .198 \\
\hline
\end{tabular}
\label{table:payment}
\end{table}

\subsection{Loan Dataset Experiments}

We conduct additional experiments using alternative versions of the loan fraud dataset to show how well our method and the baselines work under the DAGs defined in Figure \ref{fig:appendix-dags}. We also include two additional baselines that were not in the main paper: NDEE (no C) and NDEE (no S). Unlike the standard NDEE model, which controls for all covariates, NDEE (no C) doesn't control for confounders between $X^{*}$ and $Y$, and NDEE (no S) doesn't control for common causes of $A$ and $X^{*}$, e.g., $S$. These variants are used to highlight the importance of controlling for $S$ for NDEE.

\subsubsection{Simulation 1}

The first simulation replicates the setup used to generate the results in Section 5. It includes four confounders of $X^{*}$ and $Y$: education ($C_E$), sex ($C_S$), marriage ($C_M$), and age ($C_A$). Among these variables, sex  and marriage also causally effect $A$, reflecting a similar scenario represented by the DAG in Figure \ref{app:dag-g}. The simulation details are provided below:
\begin{align*}
A_{i} &\sim \text{Bernoulli}(0.05 + 0.3 (1-{C_{S}}_{i}) + 0.3 (1-{C_{M}}_{i})), \\
X^{*}_{i} &\sim \text{Bernoulli}(0.05 + 0.05 {C_{E}}_{i} + 0.3 {C_{S}}_{i} {C_{M}}_{i} + 0.1 {C_{A}}_{i}^{2} + \beta_{A} A_{i}),  \\
Y_{i} &\sim \text{Bernoulli}(0.05 + 0.05 {C_{E}}_{i} + 0.3 {C_{S}}_{i} {C_{M}}_{i} + 0.1 {C_{A}}_{i}^{2} + \beta_{X^{*}} X^*_{i}), \\
X_{i} & \sim X_{i}^* + A_{i} (1-X_{i}^*) \text{Bernoulli} (\mu), 
\end{align*}
In this simulation, NDEE (no S) doesn't control for either $C_{S}$ or $C_{M}$. Our main method, CMRE, controls for all covariates as they are all confounders between $X^{*}$ and $Y$. $\beta_{A}=0.3$ and $\beta_{X^{*}}=0.4$ unless specified otherwise. The results for this simulation are shown in Figure \ref{fig:simulation-1}.

\begin{figure*}[hbt!] 
    \center
    \scalebox{0.38}{
    \includegraphics{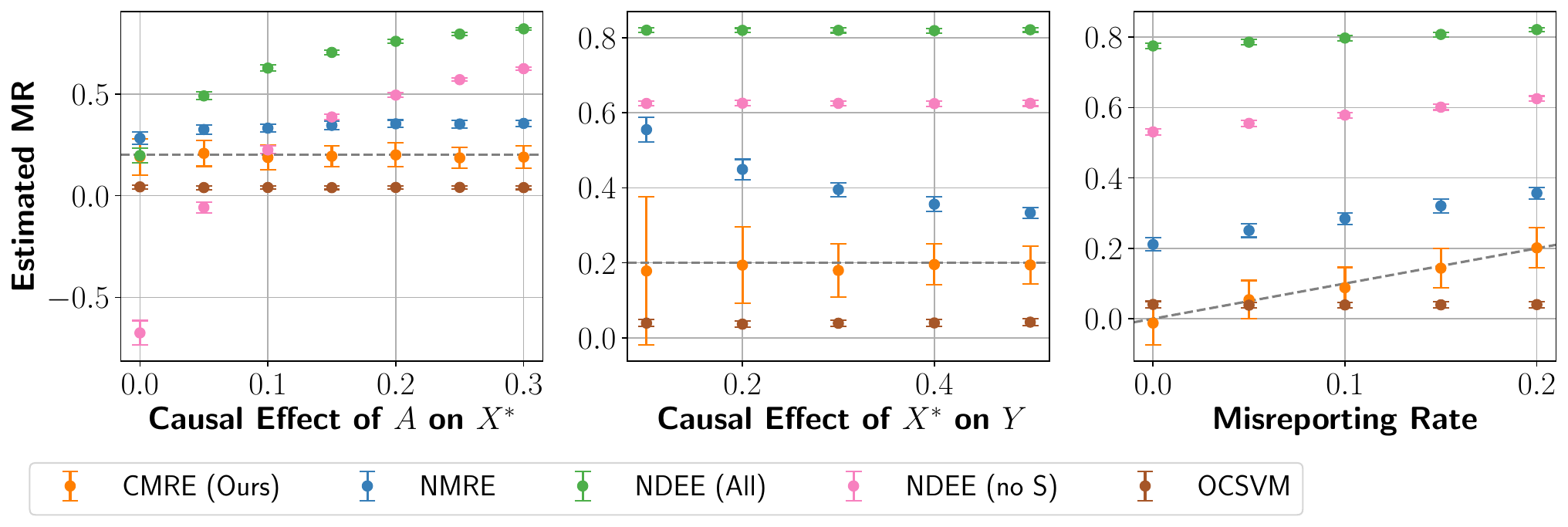}
    }
    \caption{The $x$-axis is the causal effect of $A$ on $X^{*}$ \textbf{(left)}, causal effect of $X^*$ on $Y$ \textbf{(middle)}, and the misreporting rate \textbf{(right)}. The $y$-axis is the estimated misreporting rate. Dashed lines represent the true misreporting rate and the error bars represent the standard deviation. Our approach (CMRE) accurately estimates the MR for all levels of genuine modification, the causal effect of $X^*$ on $Y$, and misreporting rates. The variance for our estimator depends on the magnitude of the causal effect of $X^*$ on $Y$. Baselines that do not adjust for confounding (NMRE) or do not distinguish between genuine modification and misreporting (NDEE) give biased estimates in various cases. NDEE is accurate when there is no genuine modification whereas NDEE (no S) is not, highlighting the need for controlling for common causes of $A$ and $X^{*}$. Anomaly detection methods (OC-SVM) are unable to distinguish misreported data points from unmanipulated data points.
    }
    \label{fig:simulation-1}
\end{figure*}

\subsubsection{Simulation 2}

The second simulation includes three confounders of $X^{*}$ and $Y$: education ($C_E$), sex ($C_S$), and age ($C_A$). Marriage ($C_M$) is a common cause of $A$ and $X^{*}$ and an agent genuinely modifies eduction, reflecting similar scenarios represented by the DAGs in Figure \ref{app:dag-a} and \ref{app:dag-b}. In addition, eduction is also a treatment effect modifier. The simulation details are provided below:
\begin{align*}
A_{i} &\sim \text{Bernoulli}(0.05 + 0.4 (1-{C_{M}}_{i})), \\
{C'_{E}}_{i} &\sim {C_{E}}_{i} + (1-{C_{E}}_{i}) A_{i} \text{Bernoulli}(\beta_{M}), \\
X^{*}_{i} &\sim \text{Bernoulli}(0.05 + 0.25  {C_{M}}_{i} + 0.1 {C'_{E}}_{i} {C_{S}}_{i} + 0.1 {C_{A}}_{i}^{2} + \beta_{A} A_{i}),  \\
Y_{i} &\sim \text{Bernoulli}(0.05 + 0.2 {C'_{E}}_{i} {C_{S}}_{i} + 0.1 {C_{A}}_{i}^{2} + (\beta_{X^{*}} + 0.1{C'_{E}}_{i})X^*_{i}), \\
X_{i} & \sim X_{i}^* + A_{i} (1-X_{i}^*) \text{Bernoulli} (\mu), 
\end{align*}
In this simulation, NDEE (no S) doesn't control for $C_{M}$ and NDEE (no C) only controls for $C_M$. Our main method, CMRE, only controls for $C_E$, $C_S$, and $C_A$. $\beta_{A}=0.1$, $\beta_{M}=0.2$, and $\beta_{X^{*}}=0.4$ unless specified otherwise.  The results for this simulation are shown in Figure \ref{fig:simulation-2}.

\begin{figure*}[hbt!] 
    \center
    \scalebox{0.38}{
    \includegraphics{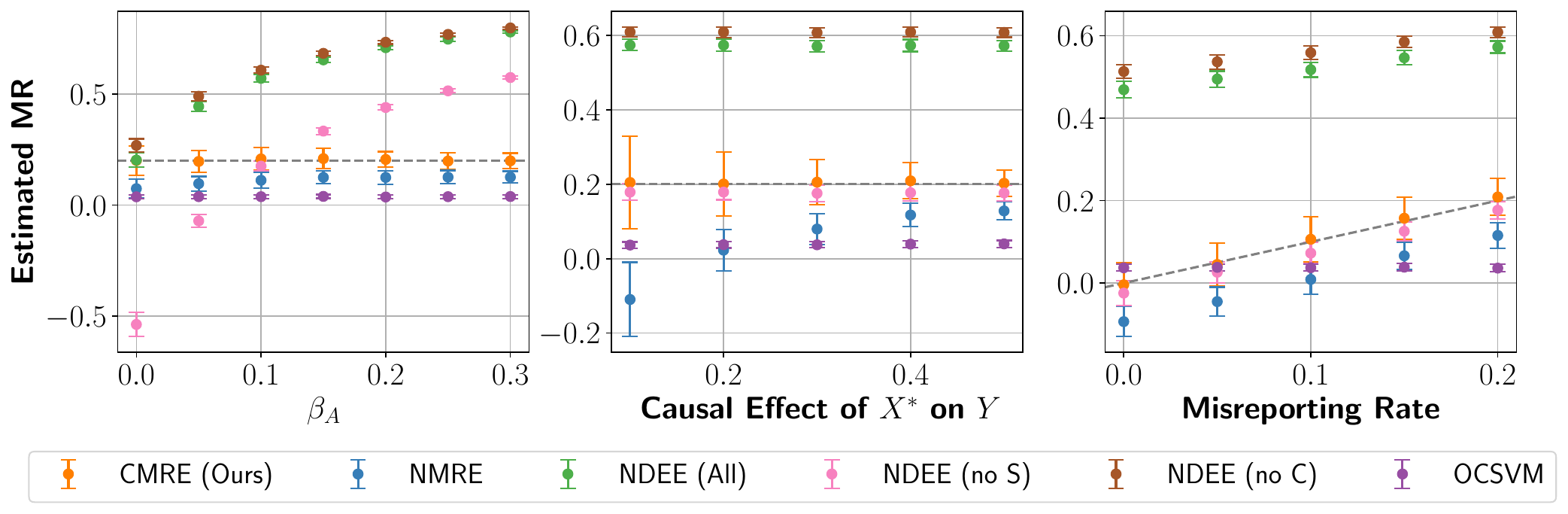}
    }
    \caption{The $x$-axis is the direct causal effect of $A$ on $X^{*}$ \textbf{(left)}, causal effect of $X^*$ on $Y$ \textbf{(middle)}, and the misreporting rate \textbf{(right)}. The $y$-axis is the estimated misreporting rate. Dashed lines represent the true misreporting rate and the error bars represent the standard deviation. Our approach (CMRE) accurately estimates the MR for all levels of genuine modification, the causal effect of $X^*$ on $Y$, and misreporting rates. The variance for our estimator depends on the magnitude of the causal effect of $X^*$ on $Y$. Baselines that do not adjust for confounding (NMRE) or do not distinguish between genuine modification and misreporting (NDEE) give biased estimates in various cases. NDEE is accurate when there is no genuine modification whereas NDEE (no S) and NDEE (no C) are not, as they either don't control for common causes of $A$ and $X^{*}$ or mediators of $A$ and $X^{*}$. Anomaly detection methods (OC-SVM) are unable to distinguish misreported data points from unmanipulated data points.
    }
    \label{fig:simulation-2}
\end{figure*}

\subsubsection{Simulation 3}

The third simulation includes three confounders of $X^{*}$ and $Y$: education ($C_E$), sex ($C_S$), and age ($C_A$). Marriage ($C_M$) is a common cause of $A$ and $Y$ and an agent genuinely modifies education, reflecting the scenario represented by the DAG in Figure \ref{app:dag-c}. In addition, education is also a treatment effect modifier. $\beta_{A}=0.1$, $\beta_{M}=0.2$, and $\beta_{X^{*}}=0.4$ unless specified otherwise. The simulation details are provided below:
\begin{align*}
A_{i} &\sim \text{Bernoulli}(0.05 + 0.4 (1-{C_{M}}_{i})), \\
{C'_{E}}_{i} &\sim {C_{E}}_{i} + (1-{C_{E}}_{i}) A_{i} \text{Bernoulli}(\beta_{M}), \\
X^{*}_{i} &\sim \text{Bernoulli}(0.05 + 0.1 {C'_{E}}_{i} {C_{S}}_{i} + 0.1 {C_{A}}_{i}^{2} + \beta_{A} A_{i}),  \\
Y_{i} &\sim \text{Bernoulli}(0.05 + 0.2  {C_{M}}_{i} + 0.1 {C'_{E}}_{i} {C_{S}}_{i} + 0.05 {C_{A}}_{i}^{2} + (\beta_{X^{*}} + 0.1{C'_{E}}_{i})X^*_{i}), \\
X_{i} & \sim X_{i}^* + A_{i} (1-X_{i}^*) \text{Bernoulli} (\mu), 
\end{align*}
In this simulation, NDEE (no C) only controls for $C_M$. Our main method, CMRE, only controls for $C_E$, $C_S$, and $C_A$. The results for this simulation are shown in Figure \ref{fig:simulation-3}.

\begin{figure*}[hbt!] 
    \center
    \scalebox{0.38}{
    \includegraphics{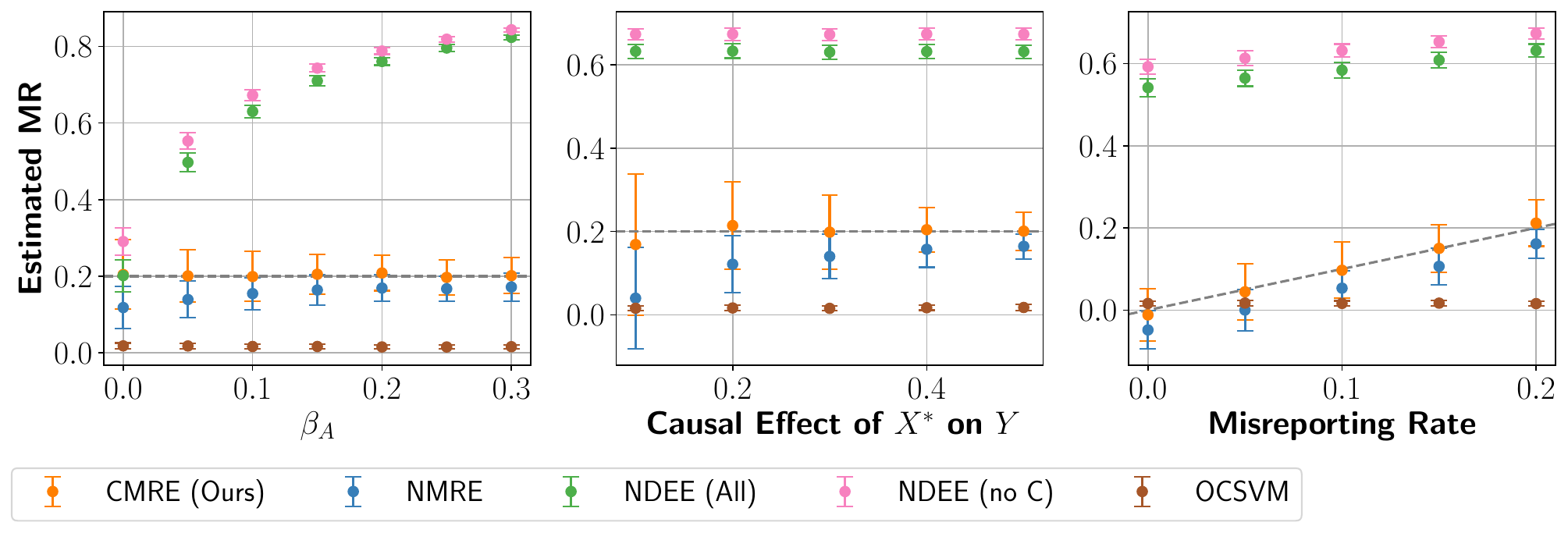}
    }
    \caption{The $x$-axis is the direct causal effect of $A$ on $X^{*}$ \textbf{(left)}, causal effect of $X^*$ on $Y$ \textbf{(middle)}, and the misreporting rate \textbf{(right)}. The $y$-axis is the estimated misreporting rate. Dashed lines represent the true misreporting rate and the error bars represent the standard deviation. Our approach (CMRE) accurately estimates the MR for all levels of genuine modification, the causal effect of $X^*$ on $Y$, and misreporting rates. The variance for our estimator depends on the magnitude of the causal effect of $X^*$ on $Y$. Baselines that do not adjust for confounding (NMRE) or do not distinguish between genuine modification and misreporting (NDEE) give biased estimates in various cases. NDEE is accurate when there is no genuine modification whereas NDEE (no C) is not, as it doesn't account for the mediators of $A$ and $X^{*}$. Anomaly detection methods (OC-SVM) are unable to distinguish misreported data points from unmanipulated data points.
    }
    \label{fig:simulation-3}
\end{figure*}

\subsubsection{Simulation 4}

The fourth simulation includes two confounders of $X^{*}$ and $Y$:  sex ($C_S$), and age ($C_A$). Marriage ($C_M$) is a common cause of $A$ and $Y$ and an agent genuinely modifies education, which is a mediator of $A$ and $X^{*}$, reflecting the scenario represented by the DAGs in Figure \ref{app:dag-d} and \ref{app:dag-e}. The simulation details are provided below:
\begin{align*}
A_{i} &\sim \text{Bernoulli}(0.05 + 0.4 (1-{C_{M}}_{i})), \\
{C'_{E}}_{i} &\sim {C_{E}}_{i} + (1-{C_{E}}_{i}) A_{i} \text{Bernoulli}(\beta_{M}), \\
X^{*}_{i} &\sim \text{Bernoulli}(0.05 + 0.3 {C'_{E}}_{i} {C_{S}}_{i} + 0.1 {C_{A}}_{i}^{2} + \beta_{A} A_{i}),  \\
Y_{i} &\sim \text{Bernoulli}(0.05 + 0.2  {C_{M}}_{i} + 0.1 {C_{S}}_{i} + 0.05 {C_{A}}_{i}^{2} + \beta_{X^{*}} X^*_{i}), \\
X_{i} & \sim X_{i}^* + A_{i} (1-X_{i}^*) \text{Bernoulli} (\mu), 
\end{align*}

In this simulation, NDEE (no C) only controls for $C_M$ and $C_E$. Our main method, CMRE, only controls for $C_S$ and $C_A$. $\beta_{A}=0.1$, $\beta_{M}=0.2$, and $\beta_{X^{*}}=0.4$ unless specified otherwise. The results for this simulation are shown in Figure \ref{fig:simulation-4}.

\begin{figure*}[hbt!] 
    \center
    \scalebox{0.38}{
    \includegraphics{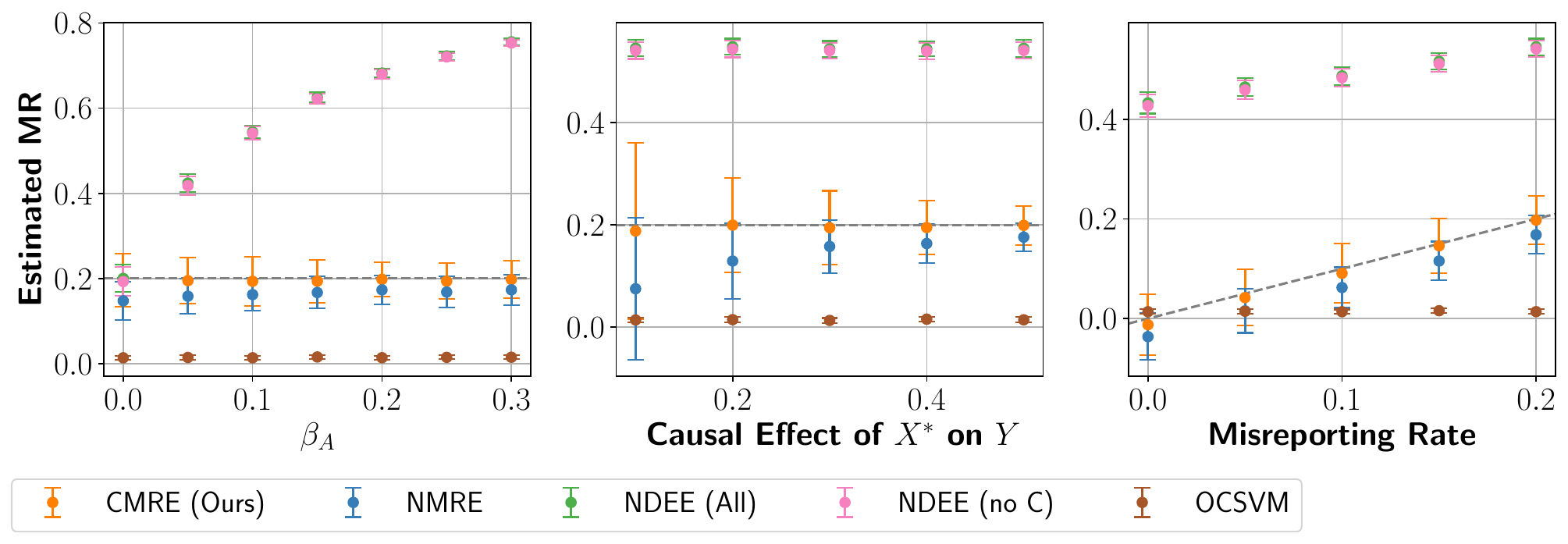}
    }
    \caption{The $x$-axis is the direct causal effect of $A$ on $X^{*}$ \textbf{(left)}, causal effect of $X^*$ on $Y$ \textbf{(middle)}, and the misreporting rate \textbf{(right)}. The $y$-axis is the estimated misreporting rate. Dashed lines represent the true misreporting rate and the error bars represent the standard deviation. Our approach (CMRE) accurately estimates the MR for all levels of genuine modification, the causal effect of $X^*$ on $Y$, and misreporting rates. The variance for our estimator depends on the magnitude of the causal effect of $X^*$ on $Y$. Baselines that do not adjust for confounding (NMRE) or do not distinguish between genuine modification and misreporting (NDEE) give biased estimates in various cases. Both NDEE and NDEE (no C) are accurate when there is no genuine modification, as they control for all common causes of $A$ and $X^{*}$ and mediators of $A$ and $X^{*}$. Anomaly detection methods (OC-SVM) are unable to distinguish misreported data points from unmanipulated data points.
    }
    \label{fig:simulation-4}
\end{figure*}

\subsubsection{Simulation 5}

The fifth simulation includes two confounders of $X^{*}$ and $Y$:  sex ($C_S$), and age ($C_A$). Marriage ($C_M$) is a common cause of $A$ and $X^{*}$ and an agent genuinely modifies education, which is a mediator of $A$ and $X^{*}$, reflecting similar scenarios represented by the DAGs in Figure \ref{app:dag-d} and \ref{app:dag-f}. The simulation details are provided below:
\begin{align*}
A_{i} &\sim \text{Bernoulli}(0.05 + 0.4 (1-{C_{M}}_{i})), \\
{C'_{E}}_{i} &\sim {C_{E}}_{i} + (1-{C_{E}}_{i}) A_{i} \text{Bernoulli}(\beta_{M}), \\
X^{*}_{i} &\sim \text{Bernoulli}(0.05 + 0.2 {C_{M}}_{i} + 0.3 {C'_{E}}_{i} {C_{S}}_{i} + 0.1 {C_{A}}_{i}^{2} + \beta_{A} A_{i}),  \\
Y_{i} &\sim \text{Bernoulli}(0.05 + 0.3 {C_{S}}_{i} + 0.05 {C_{A}}_{i}^{2} + \beta_{X^{*}} X^*_{i}), \\
X_{i} & \sim X_{i}^* + A_{i} (1-X_{i}^*) \text{Bernoulli} (\mu), 
\end{align*}
In this simulation, NDEE (no C) only controls for $C_M$ and $C_E$ and NDEE (no S) doesn't control for $C_{M}$. Our main method, CMRE, only controls for $C_S$ and $C_A$. $\beta_{A}=0.1$, $\beta_{M}=0.2$, and $\beta_{X^{*}}=0.4$ unless specified otherwise. The results for this simulation are shown in Figure \ref{fig:simulation-5}.

\begin{figure*}[hbt!] 
    \center
    \scalebox{0.38}{
    \includegraphics{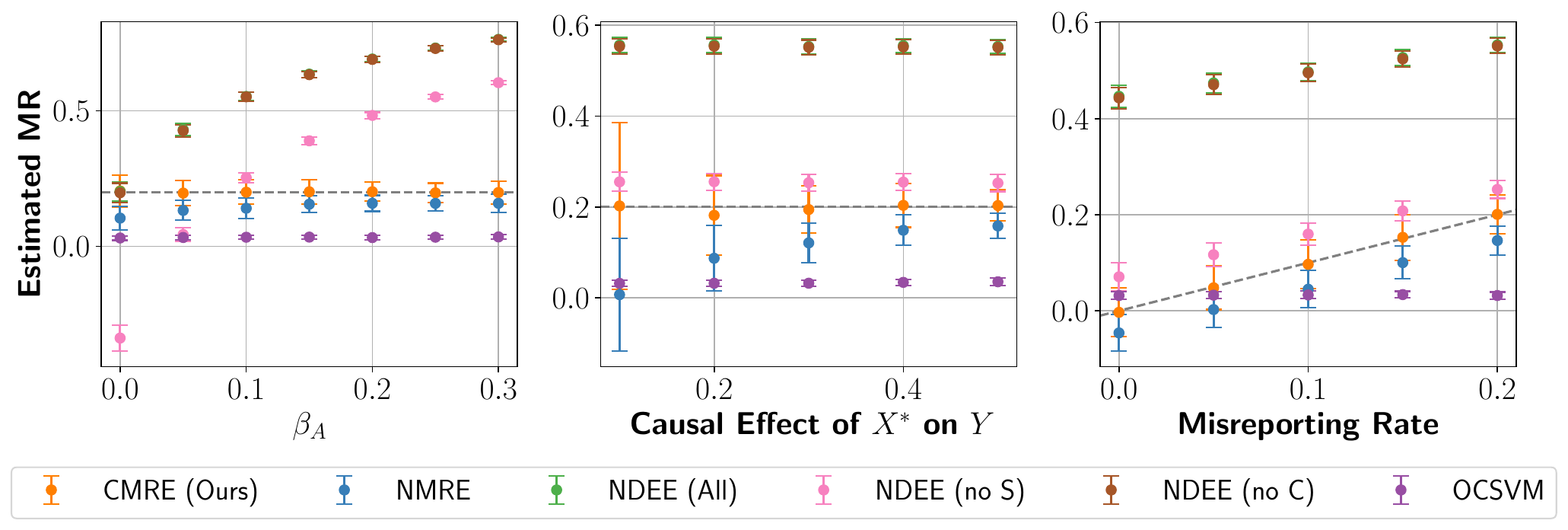}
    }
    \caption{The $x$-axis is the direct causal effect of $A$ on $X^{*}$ \textbf{(left)}, causal effect of $X^*$ on $Y$ \textbf{(middle)}, and the misreporting rate \textbf{(right)}. The $y$-axis is the estimated misreporting rate. Dashed lines represent the true misreporting rate and the error bars represent the standard deviation. Our approach (CMRE) accurately estimates the MR for all levels of genuine modification, the causal effect of $X^*$ on $Y$, and misreporting rates. The variance for our estimator depends on the magnitude of the causal effect of $X^*$ on $Y$. Baselines that do not adjust for confounding (NMRE) or do not distinguish between genuine modification and misreporting (NDEE) give biased estimates in various cases. Both NDEE and NDEE (no C) are accurate when there is no genuine modification, as they control for all common causes of $A$ and $X^{*}$ and mediators of $A$ and $X^{*}$. In contrast, NDEE does not control for common causes of $A$ and $X^{*}$, which makes it biased. Anomaly detection methods (OC-SVM) are unable to distinguish misreported data points from unmanipulated data points.
    }
    \label{fig:simulation-5}
\end{figure*}

\section{Software and Hardware} \label{appendix:software-hardware}

\subsection{Software}

All of the code for the experiments was written in Python 3.10.16 (PSF License). The XGBoost models were implemented using the XGBoost 2.1.4 (Apache License 2.0) \cite{Chen:2016:XST:2939672.2939785}. The OC-SVM baseline was implemented by using scikit-learn 1.6.1 (BSD License) \cite{scikit-learn}, which used the implementation of the One-Class SVM. To generate the semi-synthetic datasets and for data processing tasks, both numpy 2.0.2 (modified BSD license) \cite{harris2020array} and pandas 2.2.3 (BSD license) \cite{reback2020pandas} were employed. For the Medicare dataset, HCCPy 0.1.9 (Apache License 2.0) was employed to extract the HCCs from raw data. All plots were created using matplotlib 3.10.1 (PSF License) \cite{Hunter:2007}.


\subsection{Hardware}

All experiments were conducted using 16 CPU cores and 8 GB of memory on a computing cluster with 2 x 2.5 GHz Intel Haswell (Xeon E5-2680v3) processors, which was managed using a Slurm resource manager. The simulations for all of the five semi-synthetic loan experiments took approximately 5 hours to complete, whereas the experiments over the Medicare dataset took approximately 6 hours to complete.

\end{document}